\newtheorem{theorem}{Theorem}
\newcommand{\x}{\boldsymbol{x}}
\newcommand{\y}{\boldsymbol{y}}
\newcommand{\z}{\boldsymbol{z}}
\newcommand{\s}{\boldsymbol{s}}
\newcommand{\f}{\boldsymbol{f}}
\newcommand{\g}{\boldsymbol{g}}
\newcommand{\Y}{\mathcal{Y}}
\newcommand{\X}{\mathcal{X}}
\newcommand{\Z}{\mathcal{Z}}
\newcommand{\R}{\mathbb{R}}
\newcommand{\cov}{\mathrm{cov}}
\renewcommand{\vec}{\mathrm{vec}}
\renewcommand{\u}{\boldsymbol{u}}
\newcommand{\cmark}{\color{ForestGreen}\ding{51}}%
\newcommand{\xmark}{\color{RubineRed}\ding{55}}
\DeclarePairedDelimiterX{\infdivx}[2]{(}{)}{%
  #1\;\delimsize\|\;#2%
}
\begin{document}

% If your paper is accepted and the title of your paper is very long,
% the style will print as headings an error message. Use the following
% command to supply a shorter title of your paper so that it can be
% used as headings.
%
%\runningtitle{I use this title instead because the last one was very long}

% If your paper is accepted and the number of authors is large, the
% style will print as headings an error message. Use the following
% command to supply a shorter version of the authors names so that
% they can be used as headings (for example, use only the surnames)
%
\runningauthor{Ramchandran, Tikhonov, Kujanp\"a\"a, Koskinen, L\"ahdesm\"aki}

\twocolumn[

\aistatstitle{Longitudinal Variational Autoencoder}

\aistatsauthor{Siddharth Ramchandran\textsuperscript{1} \hspace{1.2cm} Gleb Tikhonov\textsuperscript{1} \hspace{1.2cm} Kalle Kujanp\"a\"a\textsuperscript{1} \\ \textbf{Miika Koskinen\textsuperscript{2,3}} \hspace{1.2cm} \textbf{Harri L\"ahdesm\"aki\textsuperscript{1}}}

\aistatsaddress{\\\textsuperscript{1}Department of Computer Science, Aalto University, Finland\\\textsuperscript{2}HUS Helsinki University Hospital, Finland\\ \textsuperscript{3}Faculty of Medicine, University of Helsinki, Finland\\ \href{mailto:siddharth.ramchandran@aalto.fi}{\nolinkurl{siddharth.ramchandran@aalto.fi}}}]

\begin{abstract}
Longitudinal datasets measured repeatedly over time from individual subjects, arise in many biomedical, psychological, social, and other studies. 
%Such multivariate time-series are often high-dimensional and contain missing values. A common approach to analyse this kind of data is to learn a low-dimensional representation using variational autoencoders (VAEs). 
A common approach to analyse high-dimensional data that contains missing values is to learn a low-dimensional representation using variational autoencoders (VAEs).
However, standard VAEs assume that the learnt representations are i.i.d., and fail to capture the correlations between the data samples. We propose the Longitudinal VAE (L-VAE), that uses a multi-output additive Gaussian process (GP) prior to extend the VAE's capability to learn structured low-dimensional representations imposed by auxiliary covariate information, and  derive a new KL divergence upper bound for such GPs. Our approach can simultaneously accommodate both time-varying shared and random effects, produce structured low-dimensional representations, disentangle effects of individual covariates or their interactions, and achieve highly accurate predictive performance. We compare our model against previous methods on synthetic as well as clinical datasets, and demonstrate the state-of-the-art performance in data imputation, reconstruction, and long-term prediction tasks.
\end{abstract}

\section{Introduction}
%Longitudinal data is a sub-type of time-series data obtained from multiple, repeatedly measured subjects (such as patients). Hence, there may be correlations among the observations within a subject and across multiple subjects. The longitudinal setting is typically distinguished by a high number of subjects and a relatively low number of measurements per subject .
Longitudinal datasets naturally arise in a wide variety of fields and applications, such as biomedicine, sociology, psychology, and many others. Such datasets include, for example, healthcare records, social media behaviour, consumer behaviour, etc., all collected repeatedly over time for each individual. Most longitudinal datasets contain both dependent and independent variables. For example, in biomedical data, dependent variables can comprise of lab tests and other measurements %and other vital measurements 
of the patient, %such as heart rate, body temperature, etc., 
whereas independent variables contain  auxiliary descriptors of the patient, such as age, sex, time to disease event, etc. Analysing longitudinal datasets collected in such studies is challenging as they often involve time-varying covariates, high-dimensional correlated measurements, and missing values. 

While non-linear, high-dimensional generative models are capable of learning complex data distributions, the statistical inference for such models is generally highly non-trivial. Auto-Encoding Variational Bayes (AEVB) \citep{kingma2013auto} is a powerful deep learning technique for efficient inference of latent variable models. The variational autoencoder (VAE) \citep{kingma2013auto, rezende2014stochastic}, the most popular exemplification of AEVB, learns a low-dimensional latent code of the dataset using two complementary deep neural networks (DNNs) to encode the high-dimensional data and decode the latent distribution, respectively. However, VAEs usually ignore the possible correlations (e.g.\ temporal correlations) between the learnt latent embeddings.
\paragraph{Related work} Numerous extensions to achieve correlations in the latent space, model temporal data, and enhance the expressiveness of posterior distributions have been proposed for VAEs. \citet{kulkarni2015deep} had proposed to group samples with specific properties in mini-batches to induce structure on the latent space. The conditional VAE (CVAE) introduced in \citep{sohn2015learning} incorporated the auxiliary covariate information directly in the inference and generative networks. However, CVAE fails to model the subject-specific temporal structure and does not explicitly constrain the latent space to achieve a low dimensional representation that evolves smoothly in time. \citet{casale2018gaussian} proposed the GPPVAE to incorporate view and object information in a Gaussian process (GP) prior, to model the VAE’s latent space structure. GPPVAE can account for temporal covariances between samples, but its ability to model subject-specific temporal structure is limited by the restrictive nature of the view-object GP product kernel. This compromises the applicability of GPPVAE in longitudinal study designs. Moreover, GPPVAE's pseudo-minibatch stochastic gradient descent (SGD) training scheme lacks the ability to scale to large data, as each training step requires a pass over the full data (epoch). \citet{fortuin2019multivariate} built upon the idea of using a latent GP in VAEs, and proposed the GP-VAE that assumes an independent GP prior on each subject's time-series. Though GP-VAE is especially designed for time-series data, it can neither capture shared temporal structure across all data points nor make use of available auxiliary covariate information other than time. 

Limitations of the expressiveness of posterior approximations in VAEs has been addressed by using normalising flows (NF) \citep{rezende2015variational} implemented with RealNVPs \citep{dinh2016density}, continuous-time NFs \citep{chen2017continuous}, inverse autoregressive flows \citep{kingma2016improved}, and importance sampling \citep{muller2018neural}. Methods have also been proposed to handle the disentanglement of dimensions in the latent space \citep{ainsworth2018disentangled,ainsworth2018oi, higgins2017beta} and improve latent representations \citep{alemi2018fixing, zhao2019infovae}. All these methods, however, assume independence across samples.

From the deep neural networks perspective, recurrent architectures (RNN) have been found to be particularly well-suited for temporal data analysis \citep{pearlmutter1989learning, giles1994dynamic}, including multi-outcome modelling problems.  For example, \citet{chung2015recurrent} proposed the variational RNN (VRNN) which extends the VAE into a recurrent framework for modelling highly structured sequential data. The VRNN models the temporal dependencies between the latent random variables across time steps. However, \citet{chung2015recurrent} do not propose a way to handle and impute missing values. Also, VRNNs neither makes use of auxiliary covariates nor takes into account differing time steps. BRITS \citep{cao2018brits} makes use of bi-directional LSTM-type RNNs \citep{schuster1997bidirectional}, and can efficiently impute missing values while accounting for the irregularities in the sampling times. However, BRITS is not a generative model which can limit the applicability of the trained model. Moreover, it is not straightforward to incorporate auxiliary information. %Adversarial training methods are popular for image datasets and have gained popularity in recent times \citep{kurakin2016adversarial, huang2011adversarial}. 
Generative adversarial networks (GANs) can also be used for time-series data imputation and modelling \citep{goodfellow2016deep, guo2019data, luo2018multivariate}. GRUI-GAN \citep{luo2018multivariate} is an RNN-based method that makes use of adversarial training. This recurrent model suffers from similar pitfalls as BRITS with the added complexity of adversarial training. Table \ref{table:comparison} contrasts the features of our proposed model to the key related methods.    

\begin{figure}[t]
\begin{center}
\includegraphics[width=\linewidth]{./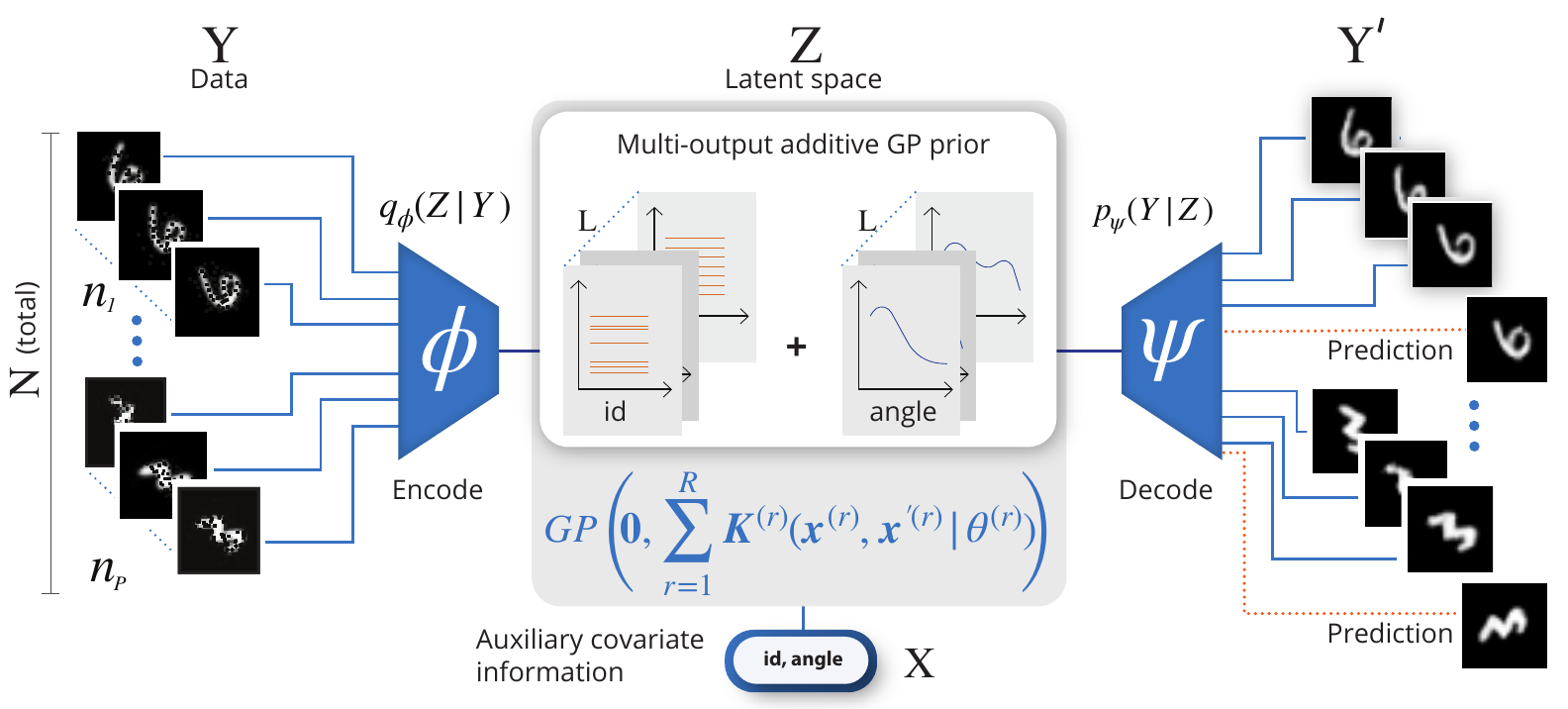}
\caption{L-VAE overview.}
\label{fig:intro}
\end{center}
\end{figure}
\begin{table*}[t]
\caption{Comparison of related methods.} \label{table:comparison}
\centering
\resizebox{\textwidth}{!}{%
\begin{tabular}{lccccccr}
\hline \\
\textbf{Models} & \begin{tabular}[c]{@{}c@{}}\textbf{Shared temporal}\\ \textbf{structure}\end{tabular} & \begin{tabular}[c]{@{}c@{}}\textbf{Individual}\\ \textbf{temporal structure}\end{tabular} & \textbf{Other covariates} & \textbf{Minibatching} & \begin{tabular}[c]{@{}c@{}}\textbf{Temporal}\\ \textbf{irregularities}\end{tabular} & \textbf{Generative} & \textbf{Reference} \\
\hline
VAE    &        \xmark     &       \xmark     &    \xmark        &     \cmark     &       \xmark      &   \cmark    &     \citet{kingma2013auto}\\
CVAE   &        \cmark     &       \xmark     &    \cmark        &     \cmark     &       \xmark      &   \cmark    &     \citet{sohn2015learning}\\
GPPVAE &        \cmark     &      Limited     &    Limited       &      Pseudo    &       \xmark      &   \cmark    &     \citet{casale2018gaussian}\\
GP-VAE &        \xmark     &       \cmark      &    \xmark        &      \cmark    &       \xmark      &   \cmark    &     \citet{fortuin2019multivariate}\\
VRNN  &        \cmark     &       \xmark      &    \xmark        &      \cmark    &       \xmark      &   \cmark    &     \citet{chung2015recurrent}\\
BRITS  &        \cmark     &       \xmark      &    \xmark        &      \cmark    &       \cmark      &   \xmark    &     \citet{cao2018brits}\\
GRUI-GAN &      \cmark     &       \xmark     &    \xmark        &      \cmark    &       \cmark      &   \cmark    &     \citet{luo2018multivariate}\\
\hline
L-VAE  &        \cmark     &       \cmark     &    \cmark        &      \cmark    &       \cmark      &   \cmark    &     Our work\\
\hline
\end{tabular}}
\end{table*}
\paragraph{Contributions} In this paper, we propose a novel deep generative model that extends the capabilities of a VAE with a multi-output additive GP prior over the latent encodings domain, that models the correlation structure between the samples w.r.t.\ auxiliary information. Our L-VAE model is conceptualised in Fig.~\ref{fig:intro}. Our model probabilistically encodes the longitudinal measurements with missing values (missing completely at random) onto a low-dimensional latent space with no missing values. The structured, low-dimensional latent dynamics are modelled using a multi-output additive GP that utilises the auxiliary covariates, followed by decoding back to the data domain. Such a GP prior introduces computational challenges for which we derive a novel divergence bound that leverages the commonly used inducing point formalism \citep{quinonero2005unifying, titsias2009variational} for efficient model inference. Our contributions can be summarised as follows:
\begin{itemize}
    \item We introduce a VAE for longitudinal data with auxiliary covariate information, that can model the structured latent space dynamics with a multi-output additive GP prior.
    \item We derive an efficient mini-batch based GP inference scheme by exploiting the natural structural properties of the additive GP covariance functions (CF) used for longitudinal modelling.
    \item We compare L-VAE's performance against competing methods on several datasets and report the state-of-the-art performance in imputation, reconstruction, and long-term prediction.
\end{itemize}
The source code is available at \url{https://github.com/SidRama/Longitudinal-VAE} .

\section{Methods}
%\paragraph{Problem setting}
\paragraph{Problem setting} Let $D$ be the dimensionality of the observed data, $P$ be the number of unique instances (e.g.\ unique patients, unique handwritten digit styles), $n_p$ be the number of longitudinal samples from instance $p$, and $N = \sum_{p=1}^{P}n_p$ be the total number of samples (e.g.\ the total number of measurements for all patients, total number of images). Longitudinal samples for instance $p$ are denoted as $Y_p = [\y_{1}^{p}, \ldots, \y_{n_p}^{p}]^T$, where each sample $\y_t^{p} \in \Y$. % = \Y_1 \times \ldots \times \Y_D$ and $\Y_d$ denotes the domain of the $d^{\mathrm{th}}$ feature. 
In this work, we assume $\Y = \R^D$. We represent the auxiliary covariate information for the instance $p$ as $X_p = [\x_1^{p},\ldots,\x_{n_p}^{p} ]^T$, where covariates for each sample $\x_t^{p} \in \X = \X_1 \times \ldots \times \X_Q$, $\X_q$ is the domain of the $q^{\mathrm{th}}$ covariate, and $Q$ is the number of covariates. %\footnote{Perhaps add extra explanation like this in footnote: Without loss of generality, we can assume that samples for instance $p$ are ordered in increasing order of time. Thus, pair $(X_p,Y_p)$ can be considered as time-series data for instance $p$.}. 
For example in Electronic Health Record (EHR) data, covariate information can include patient information such as age, sex, weight, time since remission, etc. Collectively, instance-specific samples and covariates form the full longitudinal data matrix $Y = [Y_1^T, \ldots, Y_P^T]^T =  [\y_1,\ldots,\y_N]^T$ and covariate matrix $X = [X_1^T, \ldots, X_P^T]^T = [\x_1,\ldots,\x_N]^T$, respectively. %Following the standard notation in generative modelling literature, w
We denote the low-dimensional latent space %of dimensionality $L$ 
as $\Z = \R^L$ and a latent embedding for all $N$ samples as $Z = [\z_1,\ldots,\z_N ]^T \in \mathbb{R}^{N \times L}$. We also index $Z$ across $L$ dimensions as $Z = [\bar{\z}_{1},\ldots,\bar{\z}_{L}]$, where $\bar{\z}_{l} = [z_{1l},\ldots,z_{Nl}]^T$ is a vector that contains the $l^\mathrm{th}$ dimension of the latent embedding for all $N$ samples.
%\paragraph{Auto-encoding variational Bayes}
%\label{sec:AEVB}

\paragraph{Auto-encoding variational Bayes} Consider the joint generative model $p_\omega(\y,\z) = p_\psi(\y|\z)p_\theta(\z)$ parameterised by $\omega=\{\psi,\theta\}$, and assume we are interested to infer the latent variable $\z$ given $\y$. The posterior distribution $p_\omega(\z|\y) = p_\psi(\y|\z)p_\theta(\z)/p_\omega(\y)$ is generally intractable due to the marginalisation over the latent space $p_\omega(\y) = \int p_\psi(\y|\z)p_\theta(\z)dz$. 
Auto-Encoding Variational Bayes (AEVB) \citep{kingma2013auto} is a powerful deep learning technique for latent variable models that factorise across samples as $p_\omega(Y,Z) = p_\psi(Y|Z)p_\theta(Z) = \prod_{n=1}^N p_\psi(\y_n|\z_n)p_\theta(\z_n)$.
AEVB introduces an inference model (also called probabilistic encoder) $q_{\phi}(\z|\y)$, parameterised by $\phi$, that seeks to approximate the true posterior. The most well-known AEVB model is the variational autoencoder (VAE), where the inference model as well as the probabilistic decoder $p_{\psi}(\y|\z)$ are parameterised by DNNs. Instead of optimising sample-specific variational parameters, as in standard variational inference (VI), AEVB uses amortised VI that exploits the inference model $q_{\phi}(\z|\y)$ to obtain approximate distributions for each $\z_n$ as a function of the corresponding sample $\y_n$. Then, the approximate inference problem is fitted by maximising the evidence lower bound (ELBO) of the marginal log-likelihood w.r.t.\ $\phi$:
\begin{align*} 
%    \label{eq:elbo_vae}
    &\log p_\omega(Y) \geq \mathcal{L}(\phi,\psi,\theta;Y)\nonumber\\
    &\triangleq \mathbb{E}_{q_{\phi}}[\log p_{\psi}(Y|Z)]- D_{\mathrm{KL}}(q_{\phi}(Z|Y)||p_\theta(Z)) \rightarrow \max_\phi,
\end{align*}
where $D_{\mathrm{KL}}$ denotes the KL divergence. In practice, the approximate inference is typically conducted simultaneously alongside learning the generative model's parameters via solving the joint optimisation problem $\mathcal{L}(\phi,\psi,\theta;Y)\longrightarrow \max_{\phi,\psi,\theta}$. 

\subsection{Longitudinal variational autoencoder}
The standard VAE model assumes that the joint distribution factorises across samples. Therefore, it can neither capture the potentially non-trivial structure of the data across the samples nor exploit that structure while making predictions. %\citet{sohn2015learning} introduced CVAE that incorporates conditioning variables into the VAE framework so that VAEs can generate data specific to an auxiliary variable $\x$. While the generative model becomes conditioned on the covariates  $p_\psi(Y,Z|X) = \prod_{n=1}^N p_\psi(\y_n|\z_n,\x_n)p_\psi(\z_n|\x_n)$, the samples remain independent. 
GP-VAE \citep{fortuin2019multivariate} and GPPVAE \citep{casale2018gaussian} address this issue for multivariate time-series data by assuming a GP prior on $Z$, whose CF depends on time or is factorised into feature and view components, respectively. 
%GP-VAE \citep{fortuin2019multivariate} addresses this issue for multivariate time-series data with a time-dependent GP prior on $Z$, primarily to address the imputation of missing values. GPPVAE \citep{casale2018gaussian} targeted the modelling of a time-series of images and extended the VAE with a GP prior whose covariance matrix is factorised into feature and view components. 
%However, these approaches do not take into account the additional covariates (such as patient-specific data and related attributes) that may be available in longitudinal studies. Our method can incorporate all available auxiliary information using an additive multi-output GP prior to capture the possible correlations introduced by the covariates.
However, neither of these approaches coherently accommodates population-level structure by accounting for the potentially available additional covariates (such as patient sex or genotype) with instance-specific variability. In contrast, our method naturally incorporates both using an additive multi-output GP prior.

Building upon the work of \citet{casale2018gaussian} and \citet{fortuin2019multivariate}, we propose to enhance the ability of VAEs to learn meaningful low-dimensional representations of $Y$, with a non-i.i.d.\ model for the low-dimensional latent space. This would enable our proposed method to effectively capture the structure of the data across the observed samples w.r.t.\ the auxiliary information $X$.
Specifically, our generative model, parameterised by $\omega = \{\psi,\theta\}$, is formulated as 
\begin{align}
\label{eq:gen_model}
    p_\omega(Y|X) &= \int_Z p_\psi(Y|Z,X) p_\theta(Z|X)dZ\nonumber\\ &= \int_Z\prod_{n=1}^N p_\psi(\y_n|\z_n) p_\theta(Z|X)dZ,
\end{align}
where the probabilistic decoder for normally distributed data,
\begin{equation}
\label{eq:recon_rep}
    p_\psi(\y_n|\z_n) = \mathcal{N} \left(\y_n | \g_\psi(\z_n), \text{diag}(\sigma^2_{y1},\ldots,\sigma^2_{y D}) \right)
\end{equation}
is parameterised by a neural network $\psi$ (variance parameters $\sigma_{yd}^2$ are included in $\psi$) and $p_\theta(Z|X)$ is defined by a multi-output GP prior that regulates the joint structure of $Z$ with auxiliary variables $X$.

\subsubsection{Multi-output additive GP prior}
Consider a multi-output function $\f : \X \rightarrow \Z = \R^L$ where $L > 1$, denoted as $\f(\x) = [f_1(\x),\ldots,f_L(\x)]^T$. Following \citet{alvarez2011kernels}, we denote that $\f$ follows a multi-output Gaussian process prior as $\f(\x) \sim GP(\boldsymbol{\mu}(\x), \boldsymbol{K}(\x, \x' | \theta))$, where $\boldsymbol{\mu(\x)} \in \mathbb{R}^L$ is the mean (which we assume as $\boldsymbol{0}$) and $\boldsymbol{K}(\x, \x' | \theta)$ is a matrix-valued positive definite cross-covariance function (CCF) whose entries define the covariances between the output dimensions for any $\x, \x'$. For any finite collection of inputs $X = [\x_1^T, \ldots , \x_N^T]^T$, the corresponding function values $\f(X) = [\f(\x_1)^T, \ldots, \f(\x_N)^T]^T \in \mathbb{R}^{NL \times 1}$ have a joint multivariate Gaussian distribution $p(\f(X)) = \mathcal{N}(\boldsymbol{0}, \boldsymbol{K}_{XX}(\theta))$, where the covariance matrix $\boldsymbol{K}_{XX}(\theta)$ is a block-partitioned matrix of size $NL \times NL$ with $L \times L$ blocks, so that block $[\boldsymbol{K}_{XX}(\theta)]_{i,j} = \boldsymbol{K}(\x_i, \x_j|\theta)$. 

In this work we consider multi-output GPs  that factorise across the output dimensions, which is equivalent to diagonal CCF
\begin{equation*}
    \boldsymbol{K}(\x, \x' | \theta) =  \mathrm{diag}\left(k_1(\x, \x' | \theta_1), \ldots, k_L(\x, \x' | \theta_L)\right), 
\end{equation*}
where $k_l(\x, \x' | \theta_l) = \cov(f_l(\x), f_l(\x'))$ is the CF for the $l^\mathrm{th}$ latent dimension. From the perspective of the generative model in eq.~\eqref{eq:gen_model}, such a choice is completely nonrestrictive compared to commonly used linear-type CCFs  (e.g.\ linear model of coregionalisation), since the output of the multivariate GP is multiplied with the weights of the first layer in the neural network $\psi$. Moreover, diagonal CCFs enable us to completely characterise the multi-output GP prior for multivariate latent embedding $Z$ in terms of univariate GP priors for its univariate column-components $\bar{\z}_{l}$, and to derive  an efficient inference algorithm elucidated in Sec.~\ref{sec:efficient_KLD}.

% the alternative specification of additive GP via unvariate formulation
%\begin{comment}
Further, we will assume an additive structure for each CF, where each component depends on only a single covariate or a pair of covariates. Such CFs naturally provide a way to handle heterogeneous inputs, interpretability, and disentanglement for different covariates. Specifically, for the $l^\mathrm{th}$ dimension we assume that
\begin{align*}
%    \label{eq:additive_gp}
    f_l(\x) &= f_l^{(1)}(\x^{(1)}) + \ldots + f_l^{(R)}(\x^{(R)}),\\ 
    f_l^{(r)}(\x^{(r)}) &\sim GP \left( \boldsymbol{0}, k_l^{(r)}(\x^{(r)}, \x^{(r)\prime}|\theta_l^{(r)}) \right), 
\end{align*}
where $R$ refers to the number of additive kernels/components,  each $f_l^{(r)}(\x^{(r)})$ is a separate GP with specific parameters $\theta_l^{(r)}$, and $\x^{(r)} \in \mathcal{X}^{(r)} \subseteq \mathcal{X}$. The covariance function of the additive GP is a sum of its components' covariances $f_l(\x) \sim GP\left(\boldsymbol{0}, \sum_{r=1}^R k_l^{(r)}(\x^{(r)}, \x^{(r)\prime}|\theta_l^{(r)})\right)$ \citep{williams2006gaussian}. 

Assuming the $l^\mathrm{th}$ dimension of latent embedding $\bar{\z}_{l}$ is perturbed by i.i.d.\ zero-mean Gaussian noise $\sigma_{zl}^2$,  the multi-output GP marginalised likelihood follows:
\begin{align*}
%    \label{eq:gp_facadditive}
    p(Z | X, \theta) &= \prod_{l=1}^L p(\bar{\z}_l | X, \theta_l,\sigma_{zl}^2)\\ &= \prod_{l=1}^L  \mathcal{N}\left(\bar{\z}_l | \boldsymbol{0}, \sum_{r=1}^{R}K_{XX}^{(l,r)}(\theta_l^{(r)}) + \sigma_{zl}^2 I_N \right),
\end{align*}
where $\theta$ incorporates all $\theta_l^{(r)}$ and $\sigma_{zl}$ parameters, and $K_{XX}^{(l,r)}(\theta_l^{(r)})$ is a $N \times N$ covariance matrix for $k_l^{(r)}(\x^{(r)}, \x^{(r)\prime}|\theta_l^{(r)})$.

Similar to~\citet{cheng2019additive}, we use the following elementary covariance functions to construct the additive GP components within our framework: a) the effects of continuous covariates are modelled with the squared exponential CF; b) categorical covariates are modelled either alone with the categorical CF or together with the time (or other continuous) covariate using an interaction CF, which is the product of the categorical and squared exponential CFs; and c) the product of the squared exponential CF and the binary CF is used to model covariates that are defined for a subset of samples (an example of such a covariate in a biomedical context is the time to the disease onset, which is defined only for those individuals who get a disease). All CFs for different latent dimensions and additive components have separate parameters $\theta_l^{(r)}$ ($LR$ in total). Moreover, each additive component has an output variance parameter, which is a scale factor that is learnt alongside the other parameters. We constrain the scale parameters to positive values and fix the likelihood noise $\sigma_{zl}^2$ to 1, for better identifiability. Restricting to this family of CFs enabled us to devise accurate approximate computational strategies to overcome the typically cubic scaling of GPs as described in Sec.~\ref{sec:efficient_KLD}. A detailed description of the elementary CFs used is included in Suppl.\ Sec.~1.  
%In practice, covariates may also comprise of missing values. 
To handle missingness in the covariates, we set each CF $k_l^{(r)}(\x^{(r)}, \x^{(r)\prime}|\theta_l^{(r)}) \doteq 0$ for those sample pairs $\x$ and $\x'$ that contain at least one missing value for their respective covariate(s) in $\X^{(r)}$. This ensures that the contribution of the missing values to the target variable is 0.

\subsubsection{Auto-Encoding Variational Bayes for L-VAE}
We approximate the true posterior of $Z$ with the product of multivariate Gaussian distributions across samples, each of which has a diagonal covariance matrix:
\begin{align}
    \label{eq:lvaeinfnet}
    q_{\phi}(Z | Y) &= \prod_{n=1}^N \mathcal{N}\left (\z_n | \boldsymbol{\mu}_{\phi}(\y_n), \text{diag}(\boldsymbol{\sigma}^2_{\phi}(\y_n))\right )\nonumber\\ &= \prod_{n=1}^N \prod_{l=1}^L \mathcal{N}\left (z_{nl} | {\mu}_{\phi,l}(\y_n), {\sigma}^2_{\phi,l}(\y_n))\right ).
\end{align}
Here, the probabilistic encoder is represented by neural network functions parameterised by $\phi$, $\boldsymbol{\mu}_{\phi} : \mathbb{R}^D \rightarrow \mathbb{R}^L$ and  $\boldsymbol{\sigma}^2_{\phi} : \mathbb{R}^D \rightarrow \mathbb{R}^L_+$, that determine the means and variances of the approximating variational distribution. Following the AEVB approach of \citet{kingma2013auto}, we form the ELBO for the L-VAE: 
\begin{align}
\label{eq:elbo_lonvae}
    &\log p_{\omega}(Y|X) \geq \mathcal{L}(\phi,\psi,\theta;Y,X)\\
    &\triangleq \mathbb{E}_{q_{\phi}(Z|Y)}\left[\log p_{\psi}(Y|Z) \right] - D_{\mathrm{KL}}(q_{\phi}(Z|Y)||p_\theta(Z|X)).\nonumber
\end{align}
From equations~(\ref{eq:gen_model}-\ref{eq:recon_rep}) and~\eqref{eq:lvaeinfnet} which describe the factorised decoder and variational approximation respectively, the first term of the ELBO in eq.~\eqref{eq:elbo_lonvae}, called reconstruction loss, consists of additive terms across the samples %$n=1,\ldots, N$ 
and observations which can be written as $\mathbb{E}_{q_{\phi}(Z|Y)}\left[\log p_{\psi}(Y|Z)\right] = \sum_{n=1}^N \sum_{d=1}^D \mathbb{E}_{q_{\phi}(\z_{n}|\y_n)}\left[ \log p_{\psi}(y_{nd}|\z_n) \right]$. If $Y$ contains missing values, this summation is done only over the non-missing elements.

Given that our multi-output additive GP prior factorises across the latent dimensions $p_\theta(Z|X) = \prod_{l=1}^L p(\bar{\z}_l|X,\theta_l,\sigma_{zl}^2)$, we can exploit the additive nature of the KL divergence for pairs of independent distributions:
\begin{align}
\label{eq:kld_factrorize}
    D_{\mathrm{KL}}(q_{\phi}(Z|Y,&X)||p_\theta(Z|X))\nonumber\\ &= \sum_{l=1}^L D_{\mathrm{KL}}(q_{\phi}(\bar{\z}_l|Y,X)||p_\theta(\bar{\z}_l|X)).
\end{align}
Hence, we can completely avoid explicitly dealing with the numerics of multi-output GPs.

We have experimented with, e.g., a structured variational distribution (a tri-diagonal precision matrix per longitudinal instance) motivated by Bamler and Mandt (2017). However, we did not observe any improvement in the resulting performance. We leave the exploration of other approximating variational distributions for future work.

\subsection{Efficient KL divergence computation}
\label{sec:efficient_KLD}
Optimising the variational objective in eq.~\eqref{eq:elbo_lonvae} involves the computation of the KL divergence in eq.~\eqref{eq:kld_factrorize}, which decomposes into $L$ KL divergences, 
\begin{equation}
\label{eq:kl_divergence}
\begin{aligned}
    D_{\mathrm{KL}}^{(l)} &= D_{\mathrm{KL}}(q_{\phi}(\bar{\z}_l|Y,X)||p_\theta(\bar{\z}_l|X))\\ &= D_{\mathrm{KL}}(\mathcal{N}(\bar{\boldsymbol{\mu}}_l,W_l)||\mathcal{N}(\boldsymbol{0},\Sigma_l)),
\end{aligned}
\end{equation}
where $\bar{\boldsymbol{\mu}}_{l} = [{\mu}_{\phi,l}(\y_1), \ldots, {\mu}_{\phi,l}(\y_N)]^T$, $W_l = \text{diag}({\sigma}^2_{\phi,l}(\y_1), \ldots, {\sigma}^2_{\phi,l}(\y_N))$, and $\Sigma_l = \sum_{r=1}^R K_{XX}^{(l,r)} + \sigma^2_{zl} I_N$. Each of the KL divergences is available in closed form, but its exact computation requires $\mathcal{O}(N^3)$ flops, which makes it impractical when $N$ exceeds a few thousands. Instead, we introduce a novel strategy to approximately compute this KL divergence at a significantly reduced computational cost. Without a loss of generality, we drop the index $l$ for the remainder of this section.

A closely related problem has been studied by \citet{titsias2009variational} who proposed the well-known free-form variational lower bound for a GP marginal log-likelihood $\log \mathcal{N}(\bar{\z}|\boldsymbol{0},\Sigma)$, assuming a set of $M$ inducing locations $S=[\s_1,\ldots,\s_M]$ in $\X$ and inducing function values $\u = [f(\s_1),\ldots,f(\s_M)]^T = [u_1,\ldots,u_M]^T$. In fact, any lower bound for the prior GP marginal log-likelihood induces an upper bound of KL divergence (eq.~\eqref{eq:kl_divergence}). The free-form bound of Titsias is known to be tight when $M$ is sufficiently high and the covariance function is smooth enough.

However, longitudinal studies, by definition, always contain a categorical covariate corresponding to observed instances, which makes the covariance function non-continuous. By separating the additive component that corresponds to the interaction between instances and time (or age) from the other additive components, the covariance matrix has the following general form $\Sigma = K_{XX}^{(A)} + \hat{\Sigma}$, where $\hat{\Sigma} = \text{diag}( \hat{\Sigma}_1, \dots, \hat{\Sigma}_P )$, $\hat{\Sigma}_p = K_{X_pX_p}^{(R)} + \sigma_z
^2 I_{n_p}$ and $K_{XX}^{(A)} = \sum_{r=1}^{R-1} K_{XX}^{(r)}$ contains all the other $R-1$ components. It is now clear that a reasonable inducing point set-up for the Titsias free-form bound would mandate that $M \ge P$, thus rendering the bound either computationally inefficient once $P$ is large (due to high $M$) or insufficiently tight. Since the interaction CF is essential for accurate longitudinal modelling, we devised a novel free-form divergence upper bound for this class of GPs:
\begin{equation}
\label{eq:kld_bound_novel}
\begin{aligned}
    D_{\mathrm{KL}}
    \leq  \frac{1}{2} \biggl( &\text{tr} (\bar{\Sigma}^{-1} W) + \bar{\boldsymbol{\mu}}^T \bar{\Sigma}^{-1} \bar{\boldsymbol{\mu}}
    - N + \log |\bar{\Sigma}|\\&-\log {|W|} 
    + \sum_{p=1}^P \text{tr} \left( \hat{\Sigma}_p^{-1}\tilde K_{X_pX_p}^{(A)} \right) \biggr)
\end{aligned}
\end{equation}
where $\bar{\Sigma} = K_{XS}^{(A)}K_{SS}^{(A)^{-1}}K_{SX}^{(A)} + \hat{\Sigma}$ and $\tilde K_{X_pX_p}^{(A)} = K_{X_pX_p}^{(A)} - K_{X_p S}^{(A)}K_{SS}^{(A)^{-1}}K_{SX_p}^{(A)}$.
The computational complexity of such a bound is $\mathcal{O}(\sum_{p=1}^{P}n^3_p + NM^2)$ flops, which leads to an approximately similar computational complexity as \cite{titsias2009variational} bound when $n_p \simeq M \ll N$, but is significantly tighter:
\begin{theorem}
\label{th:theorem1}
For any set of inducing points $S$, the novel bound in eq.~\eqref{eq:kld_bound_novel} is tighter than the one induced by the free-form variational bound of \citet{titsias2009variational}.
\end{theorem}
We provide a detailed derivation of the new bound and the Theorem 1 in Suppl.\ Sec.~2.
\subsection{Mini-batch training}
Although the novel KL divergence bound presented in the previous section enables $\mathcal{O}(N)$ scaling of the ELBO computation (see eq.~\eqref{eq:kld_bound_novel}), it may still be prohibitively expensive for large datasets. To address this limitation, we devised a mini-batch training scheme that allows for multiple learning steps per epoch using unbiased stochastic estimates of the ELBO and its gradient. As elaborated above, the reconstruction loss term of eq.~\eqref{eq:elbo_lonvae} trivially enables unbiased estimates. For the KL divergence part, we derived a principled mini-batching-compatible variant of the upper bound computation (eq.~\eqref{eq:kld_bound_novel}) 
% that satisfies Theorem \ref{th:theorem1} 
by building upon the SGD training of GPs with natural gradients presented in \citep{hensman2013gaussian}. We provide an in-depth formulation and technical details in Suppl.\ Sec.~3. 

L-VAE is trained using the Adam optimiser \citep{kingma2014adam} on a {\em training} set, with early-stopping evaluated on a {\em validation} set. We evaluate imputation on the {\em training} set and predictive performance on an independent {\em test} set. See Suppl.\ Sec.~7 for more details.
\subsection{Predictive distribution}
Given the training samples $Y$, covariate information $X$, and learnt parameters $\phi, \psi, \theta$, the predictive distribution for the high-dimensional out-of-sample data $\y_*$ given covariates $\x_*$ follows
%\begin{align}
%p_\omega(\y_*|\x_*,Y,X) &\approx \int_{\z_*,Z} \underbrace{p_\psi(\y_*|\z_*)}_{\text{decode GP prediction}} \underbrace{p_\theta(\z_*|\x_*,Z,X)}_{\text{GP predictive posterior}} \underbrace{q_\phi(Z|Y,X)}_{\text{encode training samples}} d\z_* dZ = \nonumber\\
%&= \int_{\z_*} \prod_{d=1}^D \mathcal{N} \left(y_{*d} | g_{\psi,d}(\z_*), \sigma^2_{y d} \right)   \prod_{l=1}^{L} \mathcal{N}(z_{*l}|\mu_{*l}, \sigma^2_{*l}) d\z_*
%\end{align}
\begin{align*}
    &p_\omega(\y_*|\x_*,Y,X) \approx \int_{\z_*,Z} p_\psi(\y_*|\z_*)p_\theta(\z_*|\x_*,Z,X)\\&\qquad\qquad\qquad\qquad\qquad\qquad\qquad\quad \cdot q_\phi(Z|Y,X)d\z_* dZ\\
    &= \int_{\z_*} \prod_{d=1}^D \mathcal{N} \left(y_{*d} | g_{\psi,d}(\z_*), \sigma^2_{y d} \right)\prod_{l=1}^{L} \mathcal{N}(z_{*l}|\mu_{*l}, \sigma^2_{*l}) d\z_*
\end{align*}
where the means of the predictive low-dimensional representation are $\mu_{*l} = K_{\x_*X}^{(l)}\Sigma_{l}^{-1} \bar{\boldsymbol{\mu}}_{l}$ and variances are $\sigma^2_{*l} = k_l(\x_*,\x_*) - K_{\x_*X}^{(l)}\Sigma_l^{-1}K_{X\x_*}^{(l)} + K_{\x_*X}^{(l)}\Sigma_l^{-1}W_l\Sigma_l^{-1}K_{X\x_*}^{(l)} + \sigma_{zl}^2$. See Suppl.\ Sec.~4 for more details.

Computation of the predictive distribution above does not assume any low-rank approximation and thus scales cubically with $N$. Unlike in the training phase, the required cubic matrix operations have to be conducted only once for any number of predictions, hence for moderate-sized data it is feasible to handle exactly. However, we can also write a scalable predictive distribution for the sparse variational approximation model that uses the inducing points. Detailed expressions are shown in Suppl.\ Sec.~5.

\section{Experiments}
\label{sec:experiments}
We demonstrate the efficacy of our method in capturing the underlying data distribution and learning meaningful latent representations, quantifying performance in missing value imputation, reconstructing from the latent space, and by long-term predictions for synthetic images and a real-world medical time series dataset. 
We compare the performance of our method with various state-of-the-art methods for the respective datasets. Moreover, we have used similar encoder and decoder network architectures when performing comparisons across methods (see Suppl.\ Sec.~8). Additional results showing the latent embeddings and more comparisons can be found in Suppl.\ Sec.~9.
In all our experiments, the \textit{id} covariate is used as an identifier of an instance (e.g.\ patient or unique handwritten digit style) that takes values in $\{1,\ldots,P\}$, whereas other covariates are specific for each dataset (e.g.\ image rotation \textit{angle} or time relative to disease onset \textit{diseaseAge}). For brevity, we denote the additive components with different covariance functions (CFs) as follows: SE CF $\f_{\mathrm{se}}(\cdot)$, categorical CF $\f_{\mathrm{ca}}(\cdot)$, binary CF $\f_{\mathrm{bi}}(\cdot)$, and their interaction CFs $\f_{\mathrm{ca} \times \mathrm{se}}(\cdot \times \cdot)$ and $\f_{\mathrm{bi} \times \mathrm{se}}(\cdot \times \cdot)$. %We also make use of independent validation data for each experiment. It is used to compute the validation loss which we use as a performance metric. See Suppl.\ Sec.~6 for more details. 
\begin{figure*}[t]
\begin{center}
\includegraphics[width=\textwidth]{./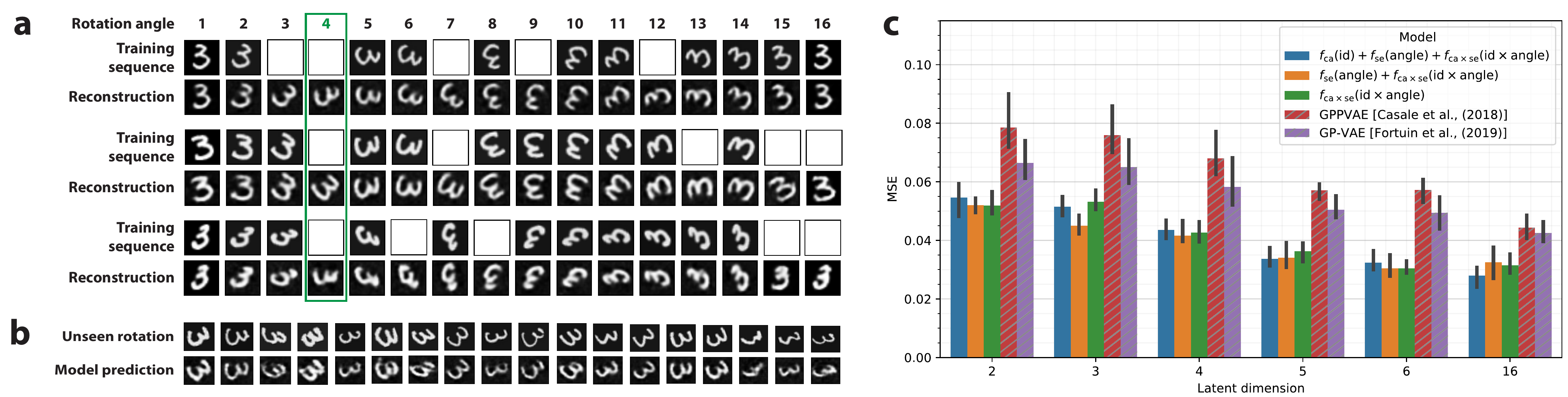}
\caption{The rotated MNIST experiments. \textbf{(a)} Reconstructions obtained from our model using $\f_{\mathrm{ca}}(\mathrm{id}) + \f_{\mathrm{se}}(\mathrm{angle}) + \f_{\mathrm{ca} \times \mathrm{se}}(\mathrm{id} \times \mathrm{angle})$, and $16$ latent dimensions. The blank boxes corresponds to the missing images. Rotation angle $4$ is completely withheld from all instances.
\textbf{(b)} Predictions for $18$ random draws of the out-of-sample prediction state (i.e. unobserved angle in panel \textbf{a} ). The first row is the real data and the bottom row is our model's prediction. \textbf{(c)} MSE on {\em test} set. %Three model variants are shown for L-VAE. 
The error bars represent the minimum and maximum values after $10$ repetitions. The {\em training}, {\em test}, and {\em validation} sets are re-sampled for each repetition.}
\label{fig:rotatedMnist_result}
\end{center}
\end{figure*}
\begin{figure*}[!t]
\begin{center}
\includegraphics[width=\textwidth]{./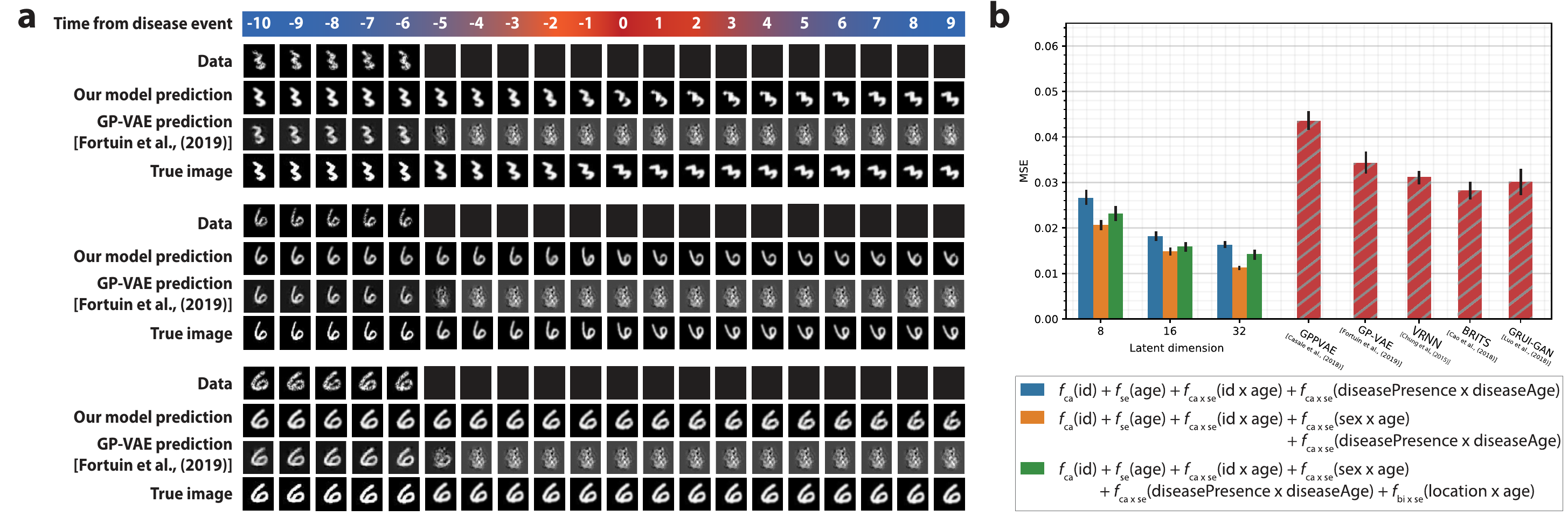}
\caption{The Health MNIST experiments. \textbf{(a)} Reconstructions and predictions obtained from our model using $\f_{\mathrm{ca}}(\mathrm{id}) + \f_{\mathrm{se}}(\mathrm{age}) + \f_{\mathrm{ca} \times \mathrm{se}}(\mathrm{id} \times \mathrm{age}) + \f_{\mathrm{ca} \times \mathrm{se}}(\mathrm{sex} \times \mathrm{age})+ \f_{\mathrm{ca} \times \mathrm{se}}(\mathrm{diseasePresence} \times \mathrm{diseaseAge})$, and $32$ latent dimensions. For the other methods, $64$ latent dimensions were used when applicable. \textbf{(b)} MSE from imputing the missing values for the observed time points. Three model variants are shown for L-VAE.}% in the Health MNIST dataset.}
\label{fig:healthMnist_result}
\end{center}
\end{figure*}
%\paragraph{Rotated MNIST digits}
\subsection{Rotated MNIST digits}
We demonstrate our method on a variant of the MNIST dataset that comprises of 400 unique instances of the digit `3' as proposed in \citet{casale2018gaussian}. Each instance in this {\em training} set is rotated through $16$ evenly separated rotation angles in $[0, 2\pi)$. That is, we have two covariates: categorical \textit{id} and continuous \textit{angle}. Moreover, the {\em validation} set comprises of $40$ unique instances of the digit `3' rotated through $16$ evenly separated rotation angles like the training set.
As proposed in \citet{casale2018gaussian}, we created the {\em test} set (out-of-sample predictions) by completely removing one of the rotation angles for each instance and further  removed four randomly selected angles to simulate incomplete data.

Hence, for each sequence, 5 images are not observed (see Fig.~\ref{fig:rotatedMnist_result}(a)).
Therefore, in this experiment the {\em training} set parameters are $P=400$, $N=4400$ (where each $n_p = 11$), and $Q=2$. The {\em test} set comprises of $400$ test images of one rotation angle each. Fig.~\ref{fig:rotatedMnist_result}(a) demonstrates that our model was able to reconstruct arbitrary rotation angles, including the out-of-sample rotation (Fig.~\ref{fig:rotatedMnist_result}(b)). Fig.~\ref{fig:rotatedMnist_result}(c) compares the mean squared error (MSE) of the {\em test} set reconstructions from our method (three different GP variants) and GPPVAE and GP-VAE. The reconstruction loss decreases with increasing latent space dimension $L$, but our method consistently outperforms both the GPPVAE and GP-VAE.
%\paragraph{Health MNIST}
\subsection{Health MNIST}
We simulate a longitudinal dataset with missing values using a modified version of the MNIST dataset. The dataset imitates many properties that may be found in actual medical data. In this experiment, we took the digits `3' as well as `6' and assumed that the different digits would represent two biological sexes. To simulate a shared age-related effect, all digit instances where shifted towards the right corner over time. We assume that half of the instances of `3' and `6' remain healthy (\textit{diseasePresence} = 0) and half get a disease (\textit{diseasePresence} = 1). For the diseased instances, we performed a sequence of $20$ rotations with the amount of rotation depending on the time to disease diagnosis (\textit{diseaseAge}).

We also introduced an irrelevant binary covariate, \textit{location} which is set randomly for each unique instance. To every data point, we applied a random rotational jitter to mimic the addition of noise. We also randomly selected $25\%$ of each image's pixels and set them as missing (we use these pixels to assess the imputation capability).  Therefore, the simulated {\em training} dataset comprises of $P=1000$ unique instances with a total of $N=20000$ samples such that each $n_p = 20$. Each sample has $Q=6$ covariates, namely \textit{age}, \textit{id}, \textit{diseasePresence}, \textit{diseaseAge}, \textit{sex}, and \textit{location}. The {\em validation} set comprised of $200$ instances which are not present in the {\em training}. Additionally, the {\em training} dataset contained $100$ additional instances for which the images of only the first $5$ time points are given (as in the `Data' row of Fig.~\ref{fig:healthMnist_result}(a)) --- we use these to assess prediction capability by computing the MSE as well as by visualising the output of the decoder. As in \citet{fortuin2019multivariate}, we try to draw an analogy to healthcare by assuming that each frame of the time series represents a collection of measurements pertaining to a patient's health state and that the temporal evolution represents the non-linear evolution of that patient's health state.

Fig.~\ref{fig:healthMnist_result}(a) indicates that our approach performs well in reconstructing the temporal trajectory (or disease trajectory as per our analogy) and is able to predict the remaining trajectory, given the corresponding covariates. The benefits of using our model can especially be seen in the time period $[-4, 9]$ as it effectively captures the non-linear transformation about the disease event. GP-VAE is also capable to effectively reconstruct in the time period $[-10,-6]$, but fails completely in future predictions because it can only utilise the \textit{age} covariate. Fig.~\ref{fig:healthMnist_result}(b) shows that our model also outperforms GP-VAE, GPPVAE, BRITS and GRUI-GAN in imputing the missing values for observed time points. 
Fig.~\ref{fig:healthMnist_result}(b) also highlights the robustness of our approach as the irrelevant covariate, \textit{location}, has a very mild detrimental effect on the overall model performance. 
%Our results indicate that the irrelevant covariate, \textit{location}, has a very mild detrimental effect on the overall model performance. This highlights the robustness of our approach to irrelevant covariates. 
Finally, Table \ref{table:healthMNIST_future} shows that L-VAE outperforms other methods in performing future predictions. 
%Also, in Table \ref{table:healthMNIST_future} we evaluate the MSE of performing future predictions using the same kernel used in the reconstructions. 
See Suppl.\ Figs.~3 and 4 for latent space visualisations. 
% \begin{figure}[!b]
% \begin{center}
% \includegraphics[width=0.9\linewidth]{AISTATS-2021/images/healthMNIST_future_pred.pdf}
% \caption{MSE from performing future predictions (i.e., from time [-5, 9]) on the Health MNIST dataset.}
% \label{fig:healthMNIST_future}
% \end{center}
% \end{figure}
%\paragraph{Healthcare data}
\begin{table}[!t]
\caption{MSE from performing future predictions (i.e., from time [-5, 9]) on the Health MNIST dataset. The values are the means and respective standard errors.} \label{table:healthMNIST_future}
\centering
\resizebox{\linewidth}{!}{%
\begin{tabular}{lcr}
\hline 
Model    & Latent dimension & MSE             \\
\hline
GPPVAE   & 64               & $0.057 \pm 0.003$  \\
GP-VAE   & 64               & $0.059 \pm 0.002$  \\
VRNN   & 64               & $0.049 \pm 0.004$  \\
BRITS    & N/A              & $0.047 \pm 0.004$  \\
GRUI-GAN & 64               & $0.053 \pm 0.007$   \\
\hline
L-VAE    & 8                & $0.038 \pm 0.003$  \\
L-VAE    & 16               & $0.033 \pm 0.0018$ \\
L-VAE    & \textbf{32}               & $\boldsymbol{0.025 \pm 0.0015}$ \\
\hline
\end{tabular}}
\end{table}
\subsection{Healthcare data}
We evaluated our model on health-care data from the Physionet Challenge 2012 \citep{silva2012predicting}. The objective of this challenge was to predict the in-hospital mortality of the patients that were monitored in the Intensive Care Unit (ICU) over a period of 48 hours. We made use of data from 3997 individuals (`set a') for {\em training} and 1000 individuals (`set c') for {\em validation}. Additionally, we used 3993 individuals (`set b') for {\em testing}. As in \citet{cao2018brits}, we focused on modelling the measurements of $35$ different attributes (such as glucose level, blood pressure, body temperature, etc.),  approx.\ $80\%$ of which are missing in the data. We also made use of $7$ patient-specific general auxiliary covariates that were made available as a part of the challenge, i.e.\ patient identifier (\textit{id}), type of ICU unit (\textit{ICUtype}), \textit{height}, \textit{weight}, \textit{age}, \textit{sex}, in-hospital death (\textit{mortality}) as well as measurement hour (\textit{time}), some of which were also missing for some patients. We constructed an additional covariate (time to mortality or \textit{mortalityTime}) based on the provided survival time (see Suppl.\ Sec.~6 for data pre-processing). For model training, data for all patients ($P=3997)$ is available hourly ($n_p=48$), so $N=191856$.
\begin{figure}[t]
\begin{center}
\includegraphics[width=0.88\linewidth]{./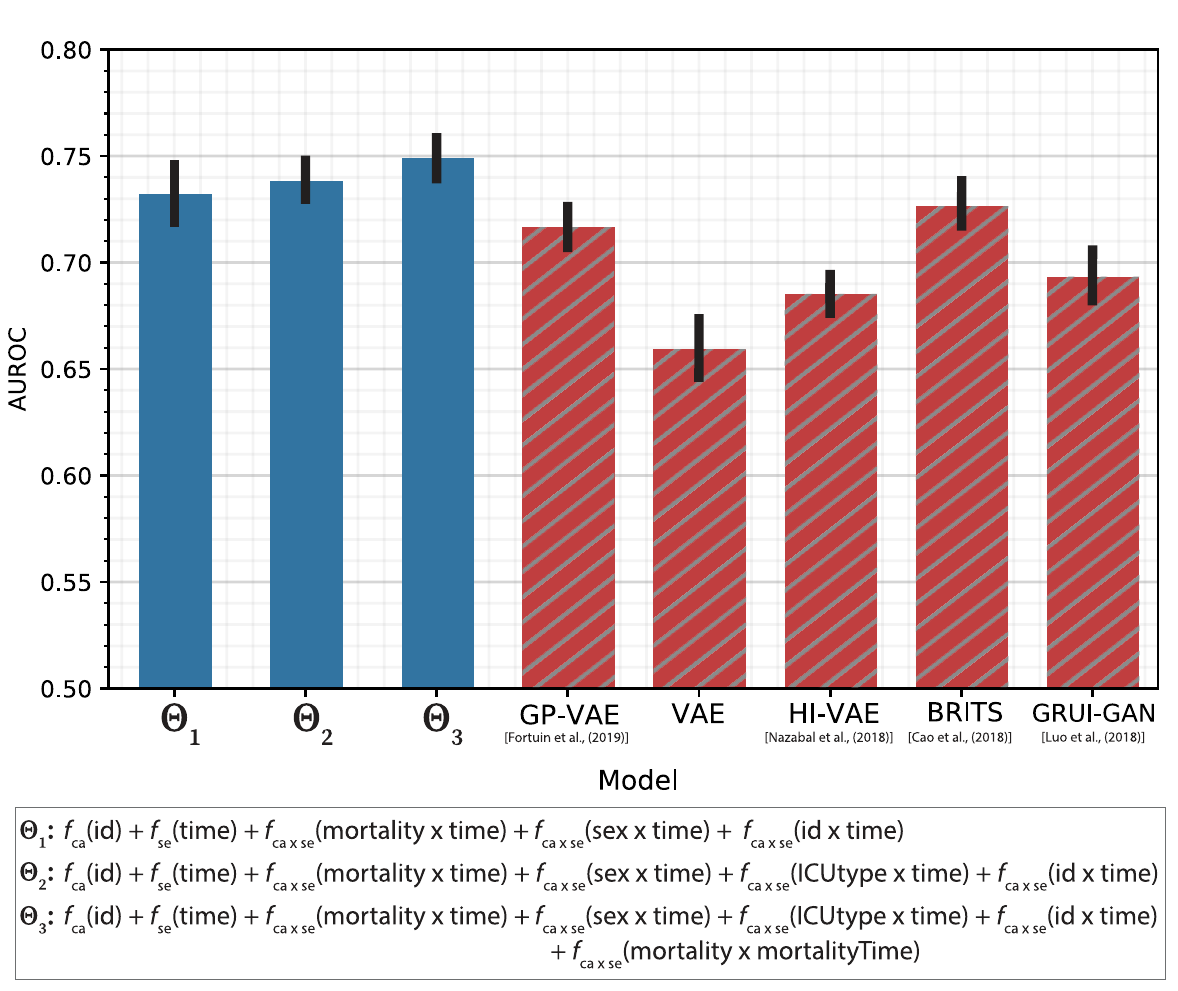}
\caption{{\em Test} set AUROC scores for the patient mortality prediction task for the Physionet Challenge 2012 dataset. The number of latent dimensions is $32$. Higher score is better. The error bars represent the minimum and maximum values after 10 repetitions.}
\label{fig:physionet_auroc}
\end{center}
\end{figure}
We trained our L-VAE model using the {\em training} samples and used it to build a Bayes classifier aimed at predicting the patient \textit{mortality} for {\em test} data. Since the {\em test} data lacks information on \textit{mortality} and \textit{mortalityTime} covariates, for each patient in the {\em test} set, characterised by a pair of 48 hour attributes time-series $Y_*$ and incomplete auxiliary information $X_*$, we approximated the marginal log-likelihoods from eq.~\eqref{eq:elbo_lonvae} of the two alternative hypotheses: $L_i = \mathcal{L}(\phi,\psi,\theta;Y_*,X_*, \text{mortality}=i)$ for $i=\{0,1\}$. Then the predicted mortality probability was computed as $P_1 = \exp(L_1)/(\exp(L_0)+\exp(L_1))$. We provide a detailed explanation of the Bayes classifier and mortality probability $P_1$ in  Suppl.\ Sec.~6.

To demonstrate the efficacy of our method, we compared the AUROC scores for predicting mortality of the {\em test} data instances obtained using our method with those obtained from GP-VAE, a standard VAE, HI-VAE \citep{nazabal2018handling}, BRITS and GRUI-GAN. These methods either do not use any auxiliary information (HI-VAE and VAE) or use only the time covariate (GP-VAE, BRITS, and GRUI-GAN). The mortality classification procedure of these methods first imputes the missing values with the generative model, and then exploits the imputed values as covariates to train a logistic regression that is finally used for mortality prediction \citep{fortuin2019multivariate}. Fig.~\ref{fig:physionet_auroc} shows that our L-VAE approach achieves higher AUROC scores. The performance with other additive GP covariance functions can be seen in Suppl.\ Fig.~5. %We trained each model $10$ times to assess the stochasticity in performance. In \citet{fortuin2019multivariate}, 
%GP-VAE was shown to compare favourably against several other relevant methods, including BRITS \citep{cao2018brits}, GRUI-GAN \citep{luo2018multivariate}, and forward/mean imputation \citep{little2002single}, thus strengthening the significance of our results\let\thefootnote\relax\footnote{Source code will be made available upon publication.}. 

\section{Discussion}
\label{sec:discussion}
In this paper, we introduced a novel deep generative model, L-VAE, that incorporates auxiliary covariate information to model the structured latent space dynamics for longitudinal datasets with missing values. We also introduced a novel computationally efficient inference strategy that exploits the structure of the additive GP covariance functions resulting in a novel lower bound. Moreover, the derived bound is theoretically guaranteed to be tighter than the free-form variational bound of \citet{titsias2009variational}. We further developed this bound to allow mini-batch SGD training for computational efficiency.  We demonstrated the efficacy of our method on synthetic as well as real-world datasets by showing that L-VAE achieves better out-of-sample prediction performance and missing value imputation than competing methods. Given the flexibility of our model and the state-of-the-art results, we expect L-VAE to become a useful tool for high-dimensional longitudinal data analysis.

%Our works assumes a fully factorised approximate posterior. As future work, it would be interesting to explore the possible improvements offered by using a structured posterior as described in  
%For future research, we suggest incorporating a framework for handling heterogeneous data though different likelihood models as proposed in \citep{nazabal2018handling}. %Our current method holds all data in memory, but we can change our approach to low-memory stochastic gradient approaches that allow scaling the model to even larger datasets.

\subsubsection*{Acknowledgements}
We would like to acknowledge the computational resources provided by Aalto Science-IT, Finland. We would also like to thank Charles Gadd for the helpful discussions.
This work was supported by the Academy of Finland [335436, 311584] and Business Finland [2383/31/2015].
%All acknowledgments go at the end of the paper, including thanks to reviewers who gave useful comments, to colleagues who contributed to the %ideas, and to funding agencies and corporate sponsors that provided financial support. 
%To preserve the anonymity, please include acknowledgments \emph{only} in the camera-ready papers.

%\subsubsection*{References}
\bibliographystyle{abbrvnat}
\bibliography{references}

\begin{thebibliography}{33}
\providecommand{\natexlab}[1]{#1}
\providecommand{\url}[1]{\texttt{#1}}
\expandafter\ifx\csname urlstyle\endcsname\relax
  \providecommand{\doi}[1]{doi: #1}\else
  \providecommand{\doi}{doi: \begingroup \urlstyle{rm}\Url}\fi

\bibitem[Ainsworth et~al.(2018{\natexlab{a}})Ainsworth, Foti, and
  Fox]{ainsworth2018disentangled}
S.~K. Ainsworth, N.~J. Foti, and E.~B. Fox.
\newblock Disentangled {VAE} representations for multi-aspect and missing data.
\newblock \emph{arXiv preprint arXiv:1806.09060}, 2018{\natexlab{a}}.

\bibitem[Ainsworth et~al.(2018{\natexlab{b}})Ainsworth, Foti, Lee, and
  Fox]{ainsworth2018oi}
S.~K. Ainsworth, N.~J. Foti, A.~K.~C. Lee, and E.~B. Fox.
\newblock oi-vae: Output interpretable vaes for nonlinear group factor
  analysis.
\newblock In \emph{Proceedings of the 35th International Conference on Machine
  Learning, {ICML}}. {PMLR}, 2018{\natexlab{b}}.

\bibitem[Alemi et~al.(2018)Alemi, Poole, Fischer, Dillon, Saurous, and
  Murphy]{alemi2018fixing}
A.~A. Alemi, B.~Poole, I.~Fischer, J.~V. Dillon, R.~A. Saurous, and K.~Murphy.
\newblock Fixing a broken {ELBO}.
\newblock In \emph{Proceedings of the 35th International Conference on Machine
  Learning, {ICML}}. {PMLR}, 2018.

\bibitem[Cao et~al.(2018)Cao, Wang, Li, Zhou, Li, and Li]{cao2018brits}
W.~Cao, D.~Wang, J.~Li, H.~Zhou, L.~Li, and Y.~Li.
\newblock {BRITS:} bidirectional recurrent imputation for time series.
\newblock In \emph{Advances in Neural Information Processing Systems,
  {NeurIPS}}, 2018.

\bibitem[Casale et~al.(2018)Casale, Dalca, Saglietti, Listgarten, and
  Fusi]{casale2018gaussian}
F.~P. Casale, A.~V. Dalca, L.~Saglietti, J.~Listgarten, and N.~Fusi.
\newblock Gaussian process prior variational autoencoders.
\newblock In \emph{Advances in Neural Information Processing Systems,
  {NeurIPS}}, 2018.

\bibitem[Chen et~al.(2018)Chen, Li, Chen, Wang, Pu, and
  Carin]{chen2017continuous}
C.~Chen, C.~Li, L.~Chen, W.~Wang, Y.~Pu, and L.~Carin.
\newblock Continuous-time flows for efficient inference and density estimation.
\newblock In \emph{Proceedings of the 35th International Conference on Machine
  Learning, {ICML}}. {PMLR}, 2018.

\bibitem[Cheng et~al.(2019)Cheng, Ramchandran, Vatanen, Lietz{\'e}n, Lahesmaa,
  Vehtari, and L{\"a}hdesm{\"a}ki]{cheng2019additive}
L.~Cheng, S.~Ramchandran, T.~Vatanen, N.~Lietz{\'e}n, R.~Lahesmaa, A.~Vehtari,
  and H.~L{\"a}hdesm{\"a}ki.
\newblock An additive gaussian process regression model for interpretable
  non-parametric analysis of longitudinal data.
\newblock \emph{Nature Communications}, 2019.

\bibitem[Chung et~al.(2015)Chung, Kastner, Dinh, Goel, Courville, and
  Bengio]{chung2015recurrent}
J.~Chung, K.~Kastner, L.~Dinh, K.~Goel, A.~C. Courville, and Y.~Bengio.
\newblock A recurrent latent variable model for sequential data.
\newblock In \emph{Advances in Neural Information Processing Systems,
  {NeurIPS}}, 2015.

\bibitem[Dinh et~al.(2017)Dinh, Sohl{-}Dickstein, and Bengio]{dinh2016density}
L.~Dinh, J.~Sohl{-}Dickstein, and S.~Bengio.
\newblock Density estimation using real {NVP}.
\newblock In \emph{5th International Conference on Learning Representations,
  {ICLR}}, 2017.

\bibitem[Fortuin et~al.(2020)Fortuin, Baranchuk, R{\"{a}}tsch, and
  Mandt]{fortuin2019multivariate}
V.~Fortuin, D.~Baranchuk, G.~R{\"{a}}tsch, and S.~Mandt.
\newblock {GP-VAE:} deep probabilistic time series imputation.
\newblock In \emph{The 23rd International Conference on Artificial Intelligence
  and Statistics, {AISTATS}}. {PMLR}, 2020.

\bibitem[Giles et~al.(1994)Giles, Kuhn, and Williams]{giles1994dynamic}
C.~L. Giles, G.~M. Kuhn, and R.~J. Williams.
\newblock Dynamic recurrent neural networks: Theory and applications.
\newblock \emph{{IEEE} Transactions on Neural Networks and Learning Systems},
  1994.

\bibitem[Goodfellow et~al.(2016)Goodfellow, Bengio, and
  Courville]{goodfellow2016deep}
I.~Goodfellow, Y.~Bengio, and A.~Courville.
\newblock \emph{Deep learning}.
\newblock MIT press, 2016.

\bibitem[Guo et~al.(2019)Guo, Wan, and Ye]{guo2019data}
Z.~Guo, Y.~Wan, and H.~Ye.
\newblock A data imputation method for multivariate time series based on
  generative adversarial network.
\newblock \emph{Neurocomputing}, 2019.

\bibitem[Hensman et~al.(2013)Hensman, Fusi, and Lawrence]{hensman2013gaussian}
J.~Hensman, N.~Fusi, and N.~D. Lawrence.
\newblock Gaussian processes for big data.
\newblock In \emph{Proceedings of the Twenty-Ninth Conference on Uncertainty in
  Artificial Intelligence, {UAI}}. {AUAI} Press, 2013.

\bibitem[Higgins et~al.(2017)Higgins, Matthey, Pal, Burgess, Glorot, Botvinick,
  Mohamed, and Lerchner]{higgins2017beta}
I.~Higgins, L.~Matthey, A.~Pal, C.~Burgess, X.~Glorot, M.~Botvinick,
  S.~Mohamed, and A.~Lerchner.
\newblock beta-vae: Learning basic visual concepts with a constrained
  variational framework.
\newblock In \emph{5th International Conference on Learning Representations,
  {ICLR}}, 2017.

\bibitem[Kingma and Ba(2015)]{kingma2014adam}
D.~P. Kingma and J.~Ba.
\newblock Adam: {A} method for stochastic optimization.
\newblock In \emph{3rd International Conference on Learning Representations,
  {ICLR}}, 2015.

\bibitem[Kingma and Welling(2014)]{kingma2013auto}
D.~P. Kingma and M.~Welling.
\newblock Auto-encoding variational bayes.
\newblock In \emph{2nd International Conference on Learning Representations,
  {ICLR}}, 2014.

\bibitem[Kingma et~al.(2016)Kingma, Salimans, Jozefowicz, Chen, Sutskever, and
  Welling]{kingma2016improved}
D.~P. Kingma, T.~Salimans, R.~Jozefowicz, X.~Chen, I.~Sutskever, and
  M.~Welling.
\newblock Improved variational inference with inverse autoregressive flow.
\newblock In \emph{Advances in Neural Information Processing Systems,
  {NeurIPS}}, 2016.

\bibitem[Kulkarni et~al.(2015)Kulkarni, Whitney, Kohli, and
  Tenenbaum]{kulkarni2015deep}
T.~D. Kulkarni, W.~F. Whitney, P.~Kohli, and J.~B. Tenenbaum.
\newblock Deep convolutional inverse graphics network.
\newblock In \emph{Advances in Neural Information Processing Systems,
  {NeurIPS}}, 2015.

\bibitem[Luo et~al.(2018)Luo, Cai, Zhang, Xu, and Yuan]{luo2018multivariate}
Y.~Luo, X.~Cai, Y.~Zhang, J.~Xu, and X.~Yuan.
\newblock Multivariate time series imputation with generative adversarial
  networks.
\newblock In \emph{Advances in Neural Information Processing Systems,
  {NeurIPS}}, 2018.

\bibitem[M{\"{u}}ller et~al.(2019)M{\"{u}}ller, McWilliams, Rousselle, Gross,
  and Nov{\'{a}}k]{muller2018neural}
T.~M{\"{u}}ller, B.~McWilliams, F.~Rousselle, M.~Gross, and J.~Nov{\'{a}}k.
\newblock Neural importance sampling.
\newblock \emph{{ACM} Transactions on Graphics}, 2019.

\bibitem[Naz{\'{a}}bal et~al.(2020)Naz{\'{a}}bal, Olmos, Ghahramani, and
  Valera]{nazabal2018handling}
A.~Naz{\'{a}}bal, P.~M. Olmos, Z.~Ghahramani, and I.~Valera.
\newblock Handling incomplete heterogeneous data using {VAE}s.
\newblock \emph{Pattern Recognition}, 2020.

\bibitem[Pearlmutter(1989)]{pearlmutter1989learning}
B.~A. Pearlmutter.
\newblock Learning state space trajectories in recurrent neural networks.
\newblock \emph{Neural Computation}, 1989.

\bibitem[Qui{\~n}onero-Candela and Rasmussen(2005)]{quinonero2005unifying}
J.~Qui{\~n}onero-Candela and C.~E. Rasmussen.
\newblock A unifying view of sparse approximate gaussian process regression.
\newblock \emph{Journal of Machine Learning Research}, 2005.

\bibitem[Rasmussen and Williams(2006)]{williams2006gaussian}
C.~E. Rasmussen and C.~K.~I. Williams.
\newblock \emph{Gaussian processes for machine learning}.
\newblock {MIT} Press, 2006.

\bibitem[Rezende and Mohamed(2015)]{rezende2015variational}
D.~J. Rezende and S.~Mohamed.
\newblock Variational inference with normalizing flows.
\newblock In \emph{Proceedings of the 32nd International Conference on Machine
  Learning, {ICML}}, 2015.

\bibitem[Rezende et~al.(2014)Rezende, Mohamed, and
  Wierstra]{rezende2014stochastic}
D.~J. Rezende, S.~Mohamed, and D.~Wierstra.
\newblock Stochastic backpropagation and approximate inference in deep
  generative models.
\newblock In \emph{Proceedings of the 31th International Conference on Machine
  Learning, {ICML}}, 2014.

\bibitem[Schuster and Paliwal(1997)]{schuster1997bidirectional}
M.~Schuster and K.~K. Paliwal.
\newblock Bidirectional recurrent neural networks.
\newblock \emph{{IEEE} Transactions on Signal Processing}, 1997.

\bibitem[Silva et~al.(2012)Silva, Moody, Scott, Celi, and
  Mark]{silva2012predicting}
I.~Silva, G.~Moody, D.~J. Scott, L.~A. Celi, and R.~G. Mark.
\newblock Predicting in-hospital mortality of icu patients: The
  physionet/computing in cardiology challenge 2012.
\newblock In \emph{2012 Computing in Cardiology}. IEEE, 2012.

\bibitem[Sohn et~al.(2015)Sohn, Lee, and Yan]{sohn2015learning}
K.~Sohn, H.~Lee, and X.~Yan.
\newblock Learning structured output representation using deep conditional
  generative models.
\newblock In \emph{Advances in Neural Information Processing Systems,
  {NeurIPS}}, 2015.

\bibitem[Titsias(2009)]{titsias2009variational}
M.~K. Titsias.
\newblock Variational learning of inducing variables in sparse gaussian
  processes.
\newblock In \emph{Proceedings of the Twelfth International Conference on
  Artificial Intelligence and Statistics, {AISTATS}}, 2009.

\bibitem[Zhao et~al.(2019)Zhao, Song, and Ermon]{zhao2019infovae}
S.~Zhao, J.~Song, and S.~Ermon.
\newblock Infovae: Balancing learning and inference in variational
  autoencoders.
\newblock In \emph{Proceedings of the AAAI conference on artificial
  intelligence}. {AAAI} Press, 2019.

\bibitem[Álvarez et~al.(2012)Álvarez, Rosasco, and
  Lawrence]{alvarez2011kernels}
M.~A. Álvarez, L.~Rosasco, and N.~D. Lawrence.
\newblock Kernels for vector-valued functions: A review.
\newblock \emph{Foundations and Trends in Machine Learning}, 2012.

\end{thebibliography}


\begin{thebibliography}{12}
\providecommand{\natexlab}[1]{#1}
\providecommand{\url}[1]{\texttt{#1}}
\expandafter\ifx\csname urlstyle\endcsname\relax
  \providecommand{\doi}[1]{doi: #1}\else
  \providecommand{\doi}{doi: \begingroup \urlstyle{rm}\Url}\fi

\bibitem[Casale et~al.(2018)Casale, Dalca, Saglietti, Listgarten, and
  Fusi]{casale2018gaussian}
F.~P. Casale, A.~V. Dalca, L.~Saglietti, J.~Listgarten, and N.~Fusi.
\newblock Gaussian process prior variational autoencoders.
\newblock In \emph{Advances in Neural Information Processing Systems,
  {NeurIPS}}, 2018.

\bibitem[Cheng et~al.(2019)Cheng, Ramchandran, Vatanen, Lietz{\'e}n, Lahesmaa,
  Vehtari, and L{\"a}hdesm{\"a}ki]{cheng2019additive}
L.~Cheng, S.~Ramchandran, T.~Vatanen, N.~Lietz{\'e}n, R.~Lahesmaa, A.~Vehtari,
  and H.~L{\"a}hdesm{\"a}ki.
\newblock An additive gaussian process regression model for interpretable
  non-parametric analysis of longitudinal data.
\newblock \emph{Nature Communications}, 2019.

\bibitem[Fortuin et~al.(2020)Fortuin, Baranchuk, R{\"{a}}tsch, and
  Mandt]{fortuin2019multivariate}
V.~Fortuin, D.~Baranchuk, G.~R{\"{a}}tsch, and S.~Mandt.
\newblock {GP-VAE:} deep probabilistic time series imputation.
\newblock In \emph{The 23rd International Conference on Artificial Intelligence
  and Statistics, {AISTATS}}. {PMLR}, 2020.

\bibitem[Goodfellow et~al.(2016)Goodfellow, Bengio, and
  Courville]{goodfellow2016deep}
I.~Goodfellow, Y.~Bengio, and A.~Courville.
\newblock \emph{Deep learning}.
\newblock MIT press, 2016.

\bibitem[Hensman et~al.(2013)Hensman, Fusi, and Lawrence]{hensman2013gaussian}
J.~Hensman, N.~Fusi, and N.~D. Lawrence.
\newblock Gaussian processes for big data.
\newblock In \emph{Proceedings of the Twenty-Ninth Conference on Uncertainty in
  Artificial Intelligence, {UAI}}. {AUAI} Press, 2013.

\bibitem[Kingma and Ba(2015)]{kingma2014adam}
D.~P. Kingma and J.~Ba.
\newblock Adam: {A} method for stochastic optimization.
\newblock In \emph{3rd International Conference on Learning Representations,
  {ICLR}}, 2015.

\bibitem[Kingma and Welling(2014)]{kingma2013auto}
D.~P. Kingma and M.~Welling.
\newblock Auto-encoding variational bayes.
\newblock In \emph{2nd International Conference on Learning Representations,
  {ICLR}}, 2014.

\bibitem[Luo et~al.(2018)Luo, Cai, Zhang, Xu, and Yuan]{luo2018multivariate}
Y.~Luo, X.~Cai, Y.~Zhang, J.~Xu, and X.~Yuan.
\newblock Multivariate time series imputation with generative adversarial
  networks.
\newblock In \emph{Advances in Neural Information Processing Systems,
  {NeurIPS}}, 2018.

\bibitem[Paszke et~al.(2019)Paszke, Gross, Massa, Lerer, Bradbury, Chanan,
  Killeen, Lin, Gimelshein, Antiga, et~al.]{paszke2019pytorch}
A.~Paszke, S.~Gross, F.~Massa, A.~Lerer, J.~Bradbury, G.~Chanan, T.~Killeen,
  Z.~Lin, N.~Gimelshein, L.~Antiga, et~al.
\newblock {PyTorch}: An imperative style, high-performance deep learning
  library.
\newblock In \emph{Advances in Neural Information Processing Systems,
  {NeurIPS}}, 2019.

\bibitem[Press et~al.(2007)Press, Teukolsky, Vetterling, and
  Flannery]{press2007numerical}
W.~H. Press, S.~A. Teukolsky, W.~T. Vetterling, and B.~P. Flannery.
\newblock \emph{Numerical Recipes 3rd Edition: The Art of Scientific
  Computing}.
\newblock Cambridge University Press, USA, 3 edition, 2007.

\bibitem[Rasmussen and Williams(2006)]{williams2006gaussian}
C.~E. Rasmussen and C.~K.~I. Williams.
\newblock \emph{Gaussian processes for machine learning}.
\newblock {MIT} Press, 2006.

\bibitem[Titsias(2009)]{titsias2009variational}
M.~K. Titsias.
\newblock Variational learning of inducing variables in sparse gaussian
  processes.
\newblock In \emph{Proceedings of the Twelfth International Conference on
  Artificial Intelligence and Statistics, {AISTATS}}, 2009.

\end{thebibliography}

\end{document}

% --- supplement: supplement.tex ---

% If your paper is accepted and the title of your paper is very long,
% the style will print as headings an error message. Use the following
% command to supply a shorter title of your paper so that it can be
% used as headings.
%
\runningtitle{Longitudinal Variational Autoencoder: Supplementary Materials}

% If your paper is accepted and the number of authors is large, the
% style will print as headings an error message. Use the following
% command to supply a shorter version of the authors names so that
% they can be used as headings (for example, use only the surnames)
%
%\runningauthor{Surname 1, Surname 2, Surname 3, ...., Surname n}
\runningauthor{Ramchandran, Tikhonov, Kujanp\"a\"a, Koskinen, L\"ahdesm\"aki}
% Supplementary material: To improve readability, you must use a single-column format for the supplementary material.
\onecolumn
\aistatstitle{Longitudinal Variational Autoencoder: \\ 
Supplementary Materials}
\section{Covariance functions}
\subsection*{Squared exponential CF} Let $\x^{(r)} = x \in \X_j$ denote an univariate continuous-valued covariate. The squared exponential (SE) CF is defined as
\begin{align*}
%    \label{eq:squared_exp}
    k_{\text{se}}(\x^{(r)}, \x^{(r)\prime}|\theta_{\text{se}}) = \sigma^2_{\text{se}}\exp\left(-\frac{(x - x')^2}{2 \l^2_{\text{se}}}\right), \qquad \theta_{\text{se}} = (\sigma^2_{\text{se}},\l_{\text{se}})
\end{align*}
where $\sigma^2_{\text{se}}$ is the magnitude parameter (also called scale) and $\l_{\text{se}} \geq 0$ is the length-scale.  The magnitude controls the marginal variance of the GP and length-scale controls its smoothness \citep{williams2006gaussian}.

\subsection*{Categorical CF} Let $\x^{(r)} = x \in \X_j$ denote a  categorical or discrete covariate. The categorical CF is defined as:
\begin{align*}
%    \label{eq:cat}
    k_{\text{ca}}(\x^{(r)}, \x^{(r)\prime}) = \begin{cases}
        1, \quad \text{if } x = x' \\
        0, \quad \text{otherwise}
    \end{cases} \qquad \theta_{\text{ca}} = \emptyset
\end{align*}

\subsection*{Binary CF} Let $\x^{(r)} = x \in \X_j$ denote an arbitrary univariate covariate. The binary CF is defined as:
\begin{align*}
%    \label{eq:bin}
    k_{\text{bi}}(\x^{(r)}, \x^{(r)\prime}) = \begin{cases}
        1, \quad \text{if } x = x' = 1 \\
        0, \quad \text{otherwise} 
    \end{cases} \qquad \theta_{\text{bi}} = \emptyset
\end{align*}

\subsection*{Interaction CF} Let $\x^{(r)} = [\x^{(a)^T},\x^{(b)^T}]^T$, where $\x^{(a)}$ and $\x^{(b)}$ are arbitrary sub-vectors of $\x$. We define the interaction CF for these subsets as the product of two CFs defined over $\x^{(a)}$ and $\x^{(b)}$ respectively:
\begin{align*}
%    \label{eq:inter}
    k_{\text{in}}(\x^{(r)}, \x^{(r)\prime}|\theta^{(r)}) = k^{(a)}(\x^{(a)}, \x^{(a)\prime}|\theta^{(a)}) k^{(b)}(\x^{(b)}, \x^{(b)\prime}|\theta^{(b)}), 
\end{align*}
where $\theta^{(r)}=\theta^{(a)} \cup \theta^{(b)}$. The interaction CF enables us to combine any combination of univariate SE, categorical, and binary CFs in a product CF. However, in practice we restrict such combinations to include no more than a single SE CF. As~\citet{cheng2019additive} stated, such a condition affords the SE GP flexibility with random intercept/slope constructions similar to the linear mixed effect modelling framework, without sacrificing interpretability. For example, in longitudinal studies an instance-specific auto-correlated temporal deviation from the population-level temporal mean can be captured by an interaction term between a categorical CF over the instance identifiers and a SE CF over the temporal covariate. Furthermore, \citet{cheng2019additive} adapted an approach to handle missing covariates based on an interaction CF containing a missing-ness mask.

\section{Efficient KL divergence computation}

As the main text states, optimising the variational objective of L-VAE involves the computation of $L$ KL divergences $D_{\mathrm{KL}}=D_{\mathrm{KL}}(\mathcal{N}(\bar{\boldsymbol{\mu}}_l,W_l)||\mathcal{N}(\boldsymbol{0},\Sigma_l))$,
where $\bar{\boldsymbol{\mu}}_l = [{\mu}_{\phi,l}(\y_1), \ldots, {\mu}_{\phi,l}(\y_N)]^T$, $W_l = \text{diag}({\sigma}^2_{\phi,l}(\y_1), \ldots, {\sigma}^2_{\phi,l}(\y_N))$, and $\Sigma_l = \sum_{r=1}^R K_{XX}^{(l,r)} + \sigma^2_{zl} I_N$. Henceforth, we drop the index $l$ for notational simplicity. Each of the KL divergences is available in closed form using the well-known expression for the KL divergence between two multivariate normal distributions:
$$
D_{\mathrm{KL}} = \frac{1}{2} \left( \text{tr} ({\Sigma}^{-1} W) + \bar{\boldsymbol{\mu}}^T {\Sigma}^{-1} \bar{\boldsymbol{\mu}}  - N + \log |{\Sigma}| - \log {|W|}  \right),
$$
but its exact computation requires $\mathcal{O}(N^3)$ flops, which makes it impractical when $N$ exceeds a few thousands. In this section, we provide a derivation of a novel strategy to approximately compute this KL divergence at a reduced computational cost.

\paragraph{KL divergence and evidence lower bound} We start by exploiting the diagonal structure of $W$ and establish the connection between the upper bound for $D_{\mathrm{KL}}$ and the evidence lower bound for the marginal log likelihood (MLL) of a Gaussian process:
\begin{align}
%\begin{gathered}
%\begin{multline}
\label{eq:kld_bound_elbo}
    D_{\mathrm{KL}}(\mathcal{N}(\bar{\boldsymbol{\mu}},W)||\mathcal{N}(\boldsymbol{0},\Sigma)) &\triangleq \int_{\z} \mathcal{N}(\z|\bar{\boldsymbol{\mu}},W) \log \left(\frac{\mathcal{N}(\z|\bar{\boldsymbol{\mu}},W)}{\mathcal{N}(\z|\boldsymbol{0},\Sigma)} \right)  d\z \nonumber \\
    &= -\int_{\z}  \log \left(\mathcal{N}(\z|\boldsymbol{0},\Sigma) \right) \mathcal{N}(\z|\bar{\boldsymbol{\mu}},W) d\z - \frac{1}{2}\log|(2\pi e)W| \nonumber\\
    &\leq -\int_{\z} \mathcal{L}(\z;\Sigma) \mathcal{N}(\z|\bar{\boldsymbol{\mu}},W) d\z - \frac{1}{2}\log|(2\pi e)W| 
%\end{multline}
\\
& \text{for any function} \; \mathcal{L}(\z;\Sigma): \mathcal{L}(\z;\Sigma) \leq \log \left(\mathcal{N}(\z|\boldsymbol{0},\Sigma) \right) \; \forall \z. \nonumber
%\end{gathered}
\end{align}
Hence, for any lower bound of GP MLL $\mathcal{L}(\z;\Sigma)$, the corresponding expression would provide an upper bound for KL divergence between the considered multivariate normal distributions. However, the lower bound of GP MLL is a much more common and well-studied problem. Please note that in the expression \eqref{eq:kld_bound_elbo} and further throughout this section, we specifically use the \textit{lower bound} and \textit{ELBO} 
terms for the GP MLL lower bound $\mathcal{L}(\z;\Sigma)$, and not for the lower bound of the deep generative model as in the main text.

\paragraph{Variational learning of inducing variables in sparse Gaussian processes} One of the most fundamental works on ELBOs for GP MLL was done by \citet{titsias2009variational}, and is based on the paradigm of low-rank inducing point approximations of GPs. We briefly recap their key results here, and later build upon their derivation by introducing modifications that would suit the specific structure of matrices $\Sigma$ in our problem setting.

We denote the set of inducing points locations in $\X$ as $S=[\s_1^T, \ldots, \s_M^T]^T$, and the value of the Gaussian process at the inducing locations as $\u=[u_1,\ldots,u_M]^T$. We recall that $\Sigma =  K_{XX} + \sigma^2_{z} I_N$ and therefore, $\z$ can be represented as the sum of noise-free GP, $\f \sim \mathcal{N}(\boldsymbol{0},K_{XX})$ and i.i.d.\ Gaussian noise. The following identities always stand:
\begin{align}
\label{eq:identities_titsias}
%\begin{gathered}
p(\z |  \f) &= \mathcal{N}(\z|\f,\sigma_{\z}^2 I_N) \nonumber\\
p(\f |  \u) &= \mathcal{N}(\f|K_{XS}K_{SS}^{-1}\u, \tilde{K}), \nonumber\\ 
\tilde{K} &= K_{XX}-K_{XS}K_{SS}^{-1}K_{SX} \nonumber\\
p(\u) &= \mathcal{N}(\u|\boldsymbol{0}, K_{SS}) \nonumber\\
p(\z) &= \int_{\u}\int_{\f} p(\z |  \f)  p(\f |  \u) p(\u) d\f d\u.
%\end{gathered}
\end{align}
Applying the Jensen inequality on the conditional log-probability $p(\z |  \u)$ leads to:
\begin{align}
\label{eq:elbo_titsias_unmarginalized_u}
\log p(\z |  \u) &= \log \int_{\f} p(\z |  \f) p(\f |  \u) d\f \\
&\ge   \int_{\f} \log \left( p(\z |  \f) \right) p(\f |  \u) d\f = \sum_{i=1}^N \left[ \log \mathcal{N}(z_i| \mu_i ,\sigma_{\z}^2) - \frac{\tilde{K}_{ii}}{2\sigma_{\z}^2} \right],
\end{align}
where $\boldsymbol{\mu} = [\mu_1, \ldots,\mu_N]^T = K_{XS}K_{SS}^{-1}\u$ and $\tilde{K}_{ii}$ denotes the $i^\mathrm{th}$ diagonal element of $\tilde{K}$. The inequation reduces to identity \textit{iff} $\u$ is a sufficient statistic of $\f$, so that all elements of $\tilde{K}$ are zero. However, the inequation remains tight and the approximation is justified as long as the $\tilde{K}_{ii}$ elements remain small, which is achieved by setting $M$ to be sufficiently high and optimising the inducing point locations $S$. After integrating out $\u$, this approximation leads to the collapsed representation of the variational evidence lower bound,
\begin{equation}
\label{eq:elbo_titsias}
\mathcal{L}_1(\z;\Sigma) \triangleq  \log \mathcal{N}(\z|\boldsymbol{0},K_{XS}K_{SS}^{-1}K_{SX}+\sigma_{\z}^2 I_N) - \frac{1}{2\sigma^2_{\z}}\text{tr}(\tilde{K}).
\end{equation}

\paragraph{Divergence upper bound for longitudinal Gaussian process}
The free-form bound is known to be tight when $M$ is sufficiently high and the covariance function is sufficiently smooth. However, longitudinal studies, by definition, always contain a categorical covariate corresponding to instances, which makes the covariance function non-continuous. By separating the additive component that corresponds to the interaction between instances and time (or age) from the other additive components, the covariance matrix has the following general form $\Sigma = K_{XX}^{(A)} + \hat{\Sigma}$, where $\hat{\Sigma} = \text{diag}( \hat{\Sigma}_1, \dots, \hat{\Sigma}_P )$, $\hat{\Sigma}_p = K_{X_pX_p}^{(R)} + \sigma_z^2 I_{n_p}$, and $K_{XX}^{(A)} = \sum_{r=1}^{R-1} K_{XX}^{(r)}$ contains all the other $R-1$ components.
In order to keep the $\tilde{K}_{ii}$ tight, the bound from eq.~\eqref{eq:elbo_titsias} would mandate that $M \ge P$, thus rendering the bound of \citet{titsias2009variational} either computationally inefficient once $P$ is large (due to high $M$) or insufficiently tight. Since the interaction covariance function is essential for accurate longitudinal modelling, we devised a novel free-form divergence upper bound for such a class of GPs. Similar to eq.~\eqref{eq:identities_titsias}, we exploit the opportunity to represent $\z$ as a sum of noise-free GP $\f_A \sim \mathcal{N}(\boldsymbol{0},K_{XX}^{(A)})$ and structured noise $\hat{\f} \sim \mathcal{N}(\boldsymbol{0},\hat{\Sigma})$, and we assign the locations as well as values of inducing points for $\f_A$. We denote $\tilde{K}^{(A)}=K_{XX}^{(A)}-K_{XS}^{(A)}{K_{SS}^{(A)}}^{-1}K_{SX}^{(A)}$ and $\tilde{K}^{(A)}_{X_pX_p} =K_{X_pX_p}^{(A)}-K_{X_pS}^{(A)}{K_{SS}^{(A)}}^{-1}K_{SX_p}^{(A)}$, then:
\begin{align*}
\log p(\z |  \u) &= \log \int_{\f_A} p(\z | \f_A) p(\f_A | \u) d\f_A \\
&=  \log \int_{\f_A} \mathcal{N}(\z |  \f_A, \hat{\Sigma}) \mathcal{N}(\f_A|K_{XS}^{(A)}{K_{SS}^{(A)}}^{-1}\u, \tilde{K}^{(A)})  d\f_A  \\
&\geq \int_{\f_A} \log \left(\mathcal{N}(\z|  \f_A, \hat{\Sigma}) \right) \mathcal{N}(\f_A|K_{XS}^{(A)}{K_{SS}^{(A)}}^{-1}\u, \tilde{K}^{(A)})  d\f_A \\
&= \sum_{p=1}^P \left[ \log \mathcal{N} \left( \z_p |  \boldsymbol{\mu}_{p}, \hat{\Sigma}_p \right) - \frac{1}{2} \text{tr} \left( \hat{\Sigma}_p^{-1} \tilde{K}^{(A)}_{X_pX_p} \right) \right],
\end{align*}
where $\z_p$ is the sub-vector of $\z$ that corresponds to the $p^\mathrm{th}$ individual, $\boldsymbol{\mu}_{p}$ is the subvector of $K_{XS}^{(A)}{K_{SS}^{(A)}}^{-1}\u$, and $\text{tr}(\cdot)$ denotes the matrix trace operator. After integrating out $\u$, we obtain a novel variational evidence lower bound,
\begin{equation}
\label{eq:elbo_novel}
\mathcal{L}_2(\z;\Sigma) \triangleq  \log \mathcal{N}\left( \z|\boldsymbol{0},K_{XS}^{(A)}{K_{SS}^{(A)}}^{-1}K_{SX}^{(A)}+\hat{\Sigma}  \right) - \frac{1}{2} \sum_{p=1}^P \text{tr} \left( \hat{\Sigma}_p^{-1} \tilde{K}^{(A)}_{X_pX_p} \right). 
\end{equation}

We substitute this lower bound in eq.~\eqref{eq:kld_bound_elbo}, which links the ELBO and upper bound for KLD:
\begin{align}
\label{eq:kld_bound_novel}
%\begin{gathered}
D_{\mathrm{KL}} &= D_{\mathrm{KL}}(\mathcal{N}(\bar{\boldsymbol{\mu}},W||\mathcal{N}(\boldsymbol{0},\Sigma) )  \leq -\int_{\z} \mathcal{L}_2(\z;\Sigma) \mathcal{N}(\z|\bar{\boldsymbol{\mu}},W) d\z - \frac{1}{2}\log|(2\pi e)W|  \nonumber\\
&= -\int_{\z} \log \mathcal{N}\left( \z|K_{XS}^{(A)}{K_{SS}^{(A)}}^{-1}K_{SX}^{(A)}+\hat{\Sigma}  \right) \mathcal{N}(\z|\bar{\boldsymbol{\mu}},W) d\z - \frac{1}{2}\log|(2\pi e)W| + \frac{1}{2} \sum_{p=1}^P \text{tr} \left( \hat{\Sigma}_p^{-1} \tilde{K}^{(A)}_{X_pX_p} \right) \nonumber \\
&= D_{\mathrm{KL}} \left( \mathcal{N}(\bar{\boldsymbol{\mu}},W||\mathcal{N}(\boldsymbol{0},K_{XS}^{(A)}{K_{SS}^{(A)}}^{-1}K_{SX}^{(A)}+\hat{\Sigma} \right) + \frac{1}{2} \sum_{p=1}^P \text{tr} \left( \hat{\Sigma}_p^{-1} \tilde{K}^{(A)}_{X_pX_p} \right) \nonumber \\
&= \frac{1}{2} \left( \text{tr} (\bar{\Sigma}^{-1} W) + \bar{\boldsymbol{\mu}}^T \bar{\Sigma}^{-1} \bar{\boldsymbol{\mu}} - N + \log |\bar{\Sigma}| - \log {|W|} + \sum_{p=1}^P \text{tr} \left( \hat{\Sigma}_p^{-1}\tilde K_{X_pX_p}^{(A)} \right) \right) \triangleq \mathcal{D}_2,
%\end{gathered}
\end{align}
where $\bar{\Sigma} = K_{XS}^{(A)}K_{SS}^{(A)^{-1}}K_{SX}^{(A)} + \hat{\Sigma}$.

\begin{theorem}
For any set of inducing points $S$ and $\bar{S}$ where $\bar{S}\subseteq S$, such that $\mathrm{rank}(K_{SS})=\mathrm{rank}(K^{(A)}_{\bar{S}\bar{S}})$, non-strict inequality $\mathcal{L}_1(\z;\Sigma,S) \le \mathcal{L}_2(\z;\Sigma,\bar{S})$ holds.
\end{theorem}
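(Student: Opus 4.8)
The plan is to exploit that both bounds are lower bounds of the \emph{same} quantity: since $\Sigma = K_{XX}+\sigma_z^2 I_N = K_{XX}^{(A)}+\hat{\Sigma}$, both $\mathcal{L}_1(\z;\Sigma,S)$ and $\mathcal{L}_2(\z;\Sigma,\bar{S})$ lower bound $\log\mathcal{N}(\z|\boldsymbol{0},\Sigma)$, so the claim is merely that $\mathcal{L}_2$ is the tighter of the two. I would introduce the intermediate bound $\mathcal{L}_2(\z;\Sigma,S)$, defined exactly as in eq.~\eqref{eq:elbo_novel} but with the larger inducing set $S$ in place of $\bar{S}$ for the additive component, and prove the chain $\mathcal{L}_1(\z;\Sigma,S)\le\mathcal{L}_2(\z;\Sigma,S)=\mathcal{L}_2(\z;\Sigma,\bar{S})$. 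Writing $Q_{XX}^{S}=K_{XS}K_{SS}^{-1}K_{SX}$, $Q_{XX}^{(A),S}=K_{XS}^{(A)}{K_{SS}^{(A)}}^{-1}K_{SX}^{(A)}$ and analogously $Q_{XX}^{(A),\bar{S}}$, this split isolates the two hypotheses: the rank condition drives the equality, while the subset and additive structure drive the inequality.

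The equality $\mathcal{L}_2(\z;\Sigma,\bar{S})=\mathcal{L}_2(\z;\Sigma,S)$ follows from the rank hypothesis alone. First I would squeeze the ranks: since $K_{\bar{S}\bar{S}}^{(A)}$ is a principal submatrix of $K_{SS}^{(A)}$ and $K_{SS}=K_{SS}^{(A)}+K_{SS}^{(R)}$ with both summands positive semidefinite, $\mathrm{rank}(K_{\bar{S}\bar{S}}^{(A)})\le\mathrm{rank}(K_{SS}^{(A)})\le\mathrm{rank}(K_{SS})$, and the hypothesis forces all three to coincide. Viewing the additive kernel through a feature map, $\mathrm{rank}(K_{SS}^{(A)})$ is the dimension of the span of the additive features at $S$; equality of ranks together with the nesting $\bar{S}\subseteq S$ then forces the additive feature spans at $\bar{S}$ and at $S$ to be identical. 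Hence the Nyström projections coincide, $Q_{XX}^{(A),\bar{S}}=Q_{XX}^{(A),S}$, so every quantity entering eq.~\eqref{eq:elbo_novel} (the covariance $Q_{XX}^{(A),S}+\hat{\Sigma}$ and each block $\tilde{K}_{X_pX_p}^{(A)}$) is unchanged, and the two instances of $\mathcal{L}_2$ are equal.

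For the inequality $\mathcal{L}_1(\z;\Sigma,S)\le\mathcal{L}_2(\z;\Sigma,S)$ the two bounds share the inducing set $S$, and I would compare them by splitting $\mathcal{L}_2(\z;\Sigma,S)-\mathcal{L}_1(\z;\Sigma,S)$ into a part quadratic in $\z$ and a $\z$-free part. The quadratic part is $-\tfrac12\z^T\big(B^{-1}-A^{-1}\big)\z$ with $A=Q_{XX}^{S}+\sigma_z^2 I_N$ and $B=Q_{XX}^{(A),S}+\hat{\Sigma}$, hence nonnegative for \emph{every} $\z$ precisely when $A\preceq B$. The key lemma is therefore $Q_{XX}^{S}\preceq Q_{XX}^{(A),S}+K_{XX}^{(R)}$ (equivalently $A\preceq B$, since $\hat{\Sigma}=K_{XX}^{(R)}+\sigma_z^2 I_N$), which I would prove in feature space: $Q_{XX}^{S}$ is the Gram matrix of the orthogonal projection of the \emph{concatenated} additive-plus-interaction features onto their span at $S$, and comparing squared projection lengths through the squared-distance decomposition shows that projecting the joint features is dominated by projecting the additive features alone and adding the interaction part in full. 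Conceptually this is exactly the statement that $\mathcal{L}_2$ keeps the block-diagonal interaction covariance $\hat{\Sigma}$ exact whereas $\mathcal{L}_1$ approximates it, so the variational posterior gap behind $\mathcal{L}_2$ is the smaller one.

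The main obstacle is the remaining $\z$-free inequality,
\[ \log|Q_{XX}^{S}+\sigma_z^2 I_N| - \log|Q_{XX}^{(A),S}+\hat{\Sigma}| + \tfrac{1}{\sigma_z^2}\tr(K_{XX}-Q_{XX}^{S}) - \sum_{p=1}^{P}\tr\big(\hat{\Sigma}_p^{-1}\tilde{K}_{X_pX_p}^{(A)}\big)\ge 0, \]
in which the log-determinant difference is nonpositive (because $A\preceq B$) and must be dominated by the difference of the two trace-correction terms. I expect to establish it by combining the ordering $A\preceq B$ with the standard bound $\log|A|-\log|B|\ge\tr(I_N-A^{-1}B)$ and then controlling the resulting trace against the correction terms via $A^{-1}\preceq\sigma_z^{-2}I_N$ and $\hat{\Sigma}^{-1}\preceq\sigma_z^{-2}I_N$; the delicate accounting is matching the full residual $K_{XX}-Q_{XX}^{S}$ against the additive residual $K_{XX}^{(A)}-Q_{XX}^{(A),S}$ reweighted by $\hat{\Sigma}^{-1}$. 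I would cross-check the algebra at the equality cases $S=X$ (both bounds exact) and $K_{XX}^{(R)}=\boldsymbol{0}$ (the two bounds coincide). Chaining $\mathcal{L}_1(\z;\Sigma,S)\le\mathcal{L}_2(\z;\Sigma,S)=\mathcal{L}_2(\z;\Sigma,\bar{S})$ then yields the theorem.
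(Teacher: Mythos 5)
Your proposal is correct, but it reaches the result by a genuinely different route from the paper. The paper never compares the two closed-form expressions directly: it augments the covariate space with two indicator covariates (leaving $\mathcal{L}_1(\z;\Sigma,S)$ unchanged), enlarges the inducing set to a $\check{S}$ that contains a copy of all $N$ data locations restricted to the interaction component, invokes monotonicity of the collapsed Titsias bound in the inducing set, evaluates $\mathcal{L}_1(\z;\Sigma,\check{S})$ in closed form via a block-matrix inversion---which reproduces exactly the Gaussian term of $\mathcal{L}_2$ with the correction $\frac{1}{2\sigma_z^2}\tr(\tilde{K}^{(A)}_{XX})$---and finishes with the single trace inequality $\tr(\hat{\Sigma}^{-1}\tilde{K}^{(A)}_{XX})\le\sigma_z^{-2}\tr(\tilde{K}^{(A)}_{XX})$. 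You instead compare the two bounds term by term, and both of your key steps check out: the Loewner ordering $K_{XS}K_{SS}^{-1}K_{SX}\preceq K^{(A)}_{XS}{K^{(A)}_{SS}}^{-1}K^{(A)}_{SX}+K^{(R)}_{XX}$ does follow from your nested-projections argument (the range of the joint projection is contained in the direct sum of the additive feature span at $S$ with the \emph{full} interaction feature space), and the $\z$-free accounting you flag as the main obstacle closes cleanly because, with your $A$ and $B$, one has the identity $B-A=\tilde{K}-\tilde{K}^{(A)}_{XX}$; hence $\log|A|-\log|B|\ge\tr(I_N-A^{-1}B)=-\tr\bigl(A^{-1}(B-A)\bigr)\ge-\sigma_z^{-2}\tr(\tilde{K})+\sigma_z^{-2}\tr(\tilde{K}^{(A)}_{XX})$, and after adding $\sigma_z^{-2}\tr(\tilde{K})$ the leftover inequality $\sigma_z^{-2}\tr(\tilde{K}^{(A)}_{XX})\ge\tr(\hat{\Sigma}^{-1}\tilde{K}^{(A)}_{XX})$ is exactly the paper's final step. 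What your approach buys is self-containedness (no appeal to the variational-family monotonicity argument) and, notably, an explicit use of the rank hypothesis to pass from $\mathcal{L}_2(\z;\Sigma,S)$ to $\mathcal{L}_2(\z;\Sigma,\bar{S})$---a step the paper's chain, which terminates at $\mathcal{L}_2(\z;\Sigma,S)$, leaves implicit. What the paper's approach buys is that the $\z$-dependent and log-determinant comparisons come for free from enlarging the inducing set, so only one trace inequality has to be verified by hand.
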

\begin{proof}
Based on the univariate additive GP $f(\x)=\sum_{r=1}^R f^{(r)}(\x)$, we construct another GP $g(v_1,v_2,\x) = \mathbbm{1}(v_1)\sum_{r=1}^{R-1} f^{(r)}(\x) + \mathbbm{1}(v_2)f^{(R)}(\x)$, where $\mathbbm{1}(v)$ is the indicator function that equals $1$ when $v=1$ and $0$ otherwise. Accordingly, we define augmented sets $\hat{X} = [\boldsymbol{1}_N,\boldsymbol{1}_N,X]$ and $\hat{S} = [\boldsymbol{1}_M,\boldsymbol{1}_M,S]$, where $\boldsymbol{1}_N$ is a column vector of ones that has length $N$, which effectively adds the two additional covariates, $v_1$ and $v_2$, into $X$ and $S$. Then, the marginal covariance of $g(v_1,v_2,\x)$ for $(\hat{X},\hat{X})$ is $K_{XX}$, for $(\hat{X},\hat{S})$ is $K_{XS}$, and for $(\hat{S},\hat{S})$ is $K_{SS}$. Thus, $\mathcal{L}_1(\z;\Sigma,S) = \mathcal{L}_1(\z;\Sigma,\hat{S})$.

Since the collapsed lower bound, eq.~\eqref{eq:elbo_titsias}, is obtained as a closed-form solution to the optimisation problem of the variational parameters of the distribution over the inducing points, expanding the set of the inducing points can only expand the variational family and, therefore, never decreases the $\mathcal{L}_1(\z;\Sigma,\hat{S})$ bound \citep{titsias2009variational}. We consider an expanded set of inducing points $\tilde{S} = [\hat{S}^T,\hat{S}^T_A,\hat{S}^T_B,\hat{X}^T_B]^T$, where $\hat{S}_A = [\boldsymbol{1}_M,\boldsymbol{0}_M,S]$ ($\boldsymbol{0}_M$ is a column vector of zeros that has length $M$), $\hat{S}_B = [\boldsymbol{0}_M,\boldsymbol{1}_M,S]$, and $\hat{X}_B = [\boldsymbol{0}_N,\boldsymbol{1}_N,X]$. For any $\s \in \hat{S}$, $g(1,1,\s)=g(1,0,\s)+g(0,1,\s)$, so that the values of $g(v_1,v_2,\x)$ over $\hat{S}$ are linearly dependent on the values over $\hat{S}_A$ and $\hat{S}_B$. Therefore, the variational family remains the same for the reduced set of inducing points $\check{S} = [\hat{S}^T_A,\hat{S}^T_B,\hat{X}^T_B]^T$, and  $\mathcal{L}_1(\z;\Sigma,\hat{S}) \le \mathcal{L}_1(\z;\Sigma,\tilde{S}) = \mathcal{L}_1(\z;\Sigma,\check{S})$.

We will next write the $\mathcal{L}_1$ bound from equation~(\ref{eq:elbo_titsias}) for $\check{S}$:
$$
\mathcal{L}_1(\z;\Sigma,\check{S}) = \log \mathcal{N}(\z|\boldsymbol{0},\check{K}_{X\check{S}}\check{K}_{\check{S}\check{S}}^{-1}\check{K}_{\check{S}X}+\sigma_{\z}^2 I_N) - \frac{1}{2\sigma^2_{\z}}\text{tr}(K_{XX}-\check{K}_{X\check{S}}\check{K}_{\check{S}\check{S}}^{-1}\check{K}_{\check{S}X}),
$$
where (from the definition of $\check{S}$)
$$
\check{K}_{X\check{S}} = \left[K^{(A)}_{XS},K^{(R)}_{XS},K^{(R)}_{XX}\right] \quad \text{and} \quad \check{K}_{\check{S}\check{S}} = \begin{bmatrix}
K^{(A)}_{SS} & 0 & 0\\
0 & K^{(R)}_{SS} & K^{(R)}_{SX} \\
0 & K^{(R)}_{XS} & K^{(R)}_{XX}
\end{bmatrix}.
$$
We recall the 2-by-2 block-matrix inverse formula for the second and third blocks of $\check{K}_{\check{S}\check{S}}$:
\begin{equation*}
\begin{bmatrix}
K^{(R)}_{SS} & K^{(R)}_{SX} \\
K^{(R)}_{XS} & K^{(R)}_{XX}
\end{bmatrix}^{-1} = 
\begin{bmatrix}
Q^{-1} & -Q^{-1}K^{(R)}_{SX}{K^{(R)}_{XX}}^{-1} \\
-{K^{(R)}_{XX}}^{-1}K^{(R)}_{XS}Q^{-1} & {K^{(R)}_{XX}}^{-1} + {K^{(R)}_{XX}}^{-1}K^{(R)}_{XS}Q^{-1}{K^{(R)}_{XX}}^{-1}
\end{bmatrix}
\end{equation*}
where $Q = K^{(R)}_{SS} - K^{(R)}_{SX}{K^{(R)}_{XX}}^{-1}K^{(R)}_{XS}$. Then, we substitute this expression into the matrix-product term $\check{K}_{X\check{S}}\check{K}_{\check{S}\check{S}}^{-1}\check{K}_{\check{S}X}$, alongside utilising the zero-blocks in first row/column of $\check{K}_{\check{S}\check{S}}$. Such a manipulation yields a very simple expression, as all the terms involving $Q$ are cancelled out:
\begin{align*}
\check{K}_{X\check{S}}\check{K}_{\check{S}\check{S}}^{-1}\check{K}_{\check{S}X} &= K^{(A)}_{XS}{K^{(A)}_{SS}}^{-1}K^{(A)}_{SX} + \left[K^{(R)}_{XS},K^{(R)}_{XX}\right] \begin{bmatrix}
K^{(R)}_{SS} & K^{(R)}_{SX} \\
K^{(R)}_{XS} & K^{(R)}_{XX}
\end{bmatrix}^{-1} \left[K^{(R)}_{XS},K^{(R)}_{XX}\right]^T\\ &= K^{(A)}_{XS}{K^{(A)}_{SS}}^{-1}K^{(A)}_{SX} + K^{(R)}_{XX}.
\end{align*}
The $\mathcal{L}_1$ bound for the inducing point set $\check{S}$ can then be written as:
\begin{align}
%\begin{gathered}
\mathcal{L}_1(\z;\Sigma,\check{S}) &= \log \mathcal{N}\left(\z|\boldsymbol{0},K^{(A)}_{XS}{K^{(A)}_{SS}}^{-1}K^{(A)}_{SX} + K^{(R)}_{XX}+\sigma_{\z}^2 I_N\right) - \frac{1}{2\sigma^2_{\z}}\text{tr}(K_{XX}-K^{(A)}_{XS}{K^{(A)}_{SS}}^{-1}K^{(A)}_{SX} - K^{(R)}_{XX})\nonumber \\
&= \log \mathcal{N}\left( \z|\boldsymbol{0},K_{XS}^{(A)}{K_{SS}^{(A)}}^{-1}K_{SX}^{(A)}+\hat{\Sigma}  \right) - \frac{1}{2\sigma^2_{\z}}\text{tr}(K^{(A)}_{XX}-K^{(A)}_{XS}{K^{(A)}_{SS}}^{-1}K^{(A)}_{SX}).\label{eq:L1checkS}
%\end{gathered}
\end{align}
Focusing on the last trace term we have,
\begin{align}
%\begin{gathered}
\text{tr} \left( \hat{\Sigma}^{-1} \tilde{K}^{(A)}_{XX} \right) &= \text{tr} \left( (K^{(R)}_{XX}+\sigma_{\z}^2 I_N)^{-1} \tilde{K}^{(A)}_{XX} \right) = \text{tr} \left( (\sigma_{\z}^{-2} I_N - \sigma_{\z}^{-4}({K^{(R)}_{XX}}^{-1}+\sigma_{\z}^{-2} I_N)^{-1}) \tilde{K}^{(A)}_{XX} \right) \nonumber \\
&= \sigma_{\z}^{-2}\text{tr} \left( \tilde{K}^{(A)}_{XX} \right) - \sigma_{\z}^{-4}\text{tr} \left( ({K^{(R)}_{XX}}^{-1}+\sigma_{\z}^{-2} I_N)^{-1} \tilde{K}^{(A)}_{XX} \right)\nonumber \\ 
&= \sigma_{\z}^{-2}\text{tr} \left( \tilde{K}^{(A)}_{XX} \right) - \sigma_{\z}^{-4}\text{tr} \left( \hat{L}\hat{L}^T \tilde{K}^{(A)}_{XX} \right) = \sigma_{\z}^{-2}\text{tr} \left( \tilde{K}^{(A)}_{XX} \right) - \sigma_{\z}^{-4}\text{tr} \left( \hat{L}^T \tilde{K}^{(A)}_{XX} \hat{L} \right)\nonumber \\ 
&\le \sigma_{\z}^{-2}\text{tr} \left( \tilde{K}^{(A)}_{XX} \right) \label{eq:trace-ineq}\\
&= \frac{1}{\sigma^2_{\z}}\text{tr}(K^{(A)}_{XX}-K^{(A)}_{XS}{K^{(A)}_{SS}}^{-1}K^{(A)}_{SX})\nonumber,
%\end{gathered}
\end{align}
where $\hat{L}$ is the Cholesky decomposition of $({K^{(R)}_{XX}}^{-1}+\sigma_{\z}^{-2} I_N)^{-1}$. As matrix  ${K^{(R)}_{XX}}^{-1}+\sigma_{\z}^{-2} I_N$ is always positive definite, its inverse and corresponding Cholesky decomposition always exist. Additionally, we used the matrix trace cyclic rotation property and the fact that the trace of a positive semidefinite matrix is $\hat{L}^T \tilde{K}^{(A)}_{XX} \hat{L}$, which is always non-negative. 
Combining equations~(\ref{eq:L1checkS}) and~(\ref{eq:trace-ineq}) we have 
\begin{align*}
    \mathcal{L}_1(\z;\Sigma,\check{S}) &= \log \mathcal{N}\left( \z|\boldsymbol{0},K_{XS}^{(A)}{K_{SS}^{(A)}}^{-1}K_{SX}^{(A)}+\hat{\Sigma}  \right) - \frac{1}{2\sigma^2_{\z}}\text{tr}(K^{(A)}_{XX}-K^{(A)}_{XS}{K^{(A)}_{SS}}^{-1}K^{(A)}_{SX})\\ 
    &\le \log \mathcal{N}\left( \z|\boldsymbol{0},K_{XS}^{(A)}{K_{SS}^{(A)}}^{-1}K_{SX}^{(A)}+\hat{\Sigma}  \right) - \frac{1}{2}\text{tr} \left( \hat{\Sigma}^{-1} \tilde{K}^{(A)}_{XX} \right)\\
    &= \mathcal{L}_2(\z;\Sigma,S).
\end{align*}
Consequently, the non-strict inequalities chain gives the final result:
$$
\mathcal{L}_1(\z;\Sigma,S) = \mathcal{L}_1(\z;\Sigma,\hat{S}) \le \mathcal{L}_1(\z;\Sigma,\tilde{S}) = \mathcal{L}_1(\z;\Sigma,\check{S}) \le \mathcal{L}_2(\z;\Sigma,S)
$$
\end{proof}

\paragraph{Computational complexity}
The computational complexity of the bound described in eq.~\eqref{eq:kld_bound_novel}, is $\mathcal{O}(\sum_{p=1}^{P}n^3_p + NM^2)$ flops, which leads to an approximately similar computational complexity as the \cite{titsias2009variational} bound, when $n_p \simeq M \ll N$, but is significantly tighter. Here, we elucidate in detail how to perform the computations in eq.~\eqref{eq:kld_bound_novel} to achieve such complexity. The first term of eq.~\eqref{eq:kld_bound_novel} can be written as:
\begin{align*}
%\begin{gathered}
\text{tr} (\bar{\Sigma}^{-1} W) &= \text{tr} (\hat{\Sigma}^{-1} W) - \text{tr} \left( \hat{\Sigma}^{-1}K_{XS}^{(A)} \left[ K_{SS}^{(A)} +  K_{SX}^{(A)}\hat{\Sigma}^{-1}K_{XS}^{(A)} \right]^{-1} K_{SX}^{(A)}\hat{\Sigma}^{-1} W \right) \\
 &= \sum_{p=1}^P \left(\text{diag}(\hat{\Sigma}^{-1}) \cdot \text{diag}(W_p) \right) - \text{tr} \left(  V^{-1} \left( K_{SX}^{(A)}\hat{\Sigma}^{-1} W  \hat{\Sigma}^{-1} K_{XS}^{(A)} \right) \right),
%\end{gathered}
\end{align*}
where $V=K_{SS}^{(A)} +  K_{SX}^{(A)}\hat{\Sigma}^{-1}K_{XS}^{(A)}$. We have used the Woodbury matrix identity to obtain the first equality and the cyclic rotation property of the matrix trace to obtain the second equality \citep{press2007numerical}. Since $\hat{\Sigma}$ is a block-diagonal matrix, obtaining $\hat{\Sigma}^{-1}$ takes  $\mathcal{O}(\sum_{p=1}^{P}n^3_p)$ flops and obtaining products $K_{SX}^{(A)}\hat{\Sigma}^{-1}K_{XS}^{(A)}$ or $K_{SX}^{(A)}( \hat{\Sigma}^{-1} W  \hat{\Sigma}^{-1}) K_{XS}^{(A)}$ takes $\mathcal{O}(\sum_{p=1}^{P}n_pM^2)=\mathcal{O}(NM^2)$ flops. Moreover, inverting $V$ takes $\mathcal{O}(M^3)$ flops.

We can use the Woodbury matrix identity to write the second term as,
$$
\bar{\boldsymbol{\mu}}^T \bar{\Sigma}^{-1} \bar{\boldsymbol{\mu}} = \bar{\boldsymbol{\mu}}^T \hat{\Sigma}^{-1} \bar{\boldsymbol{\mu}} - \bar{\boldsymbol{\mu}}^T  \hat{\Sigma}^{-1}K_{XS}^{(A)} V^{-1} K_{SX}^{(A)}\hat{\Sigma}^{-1} \bar{\boldsymbol{\mu}}.
$$
Obtaining $\hat{\Sigma}^{-1} \bar{\boldsymbol{\mu}}$ takes $\mathcal{O}(\sum_{p=1}^{P}n^2_p)$ flops and obtaining $K_{SX}^{(A)}\left( \hat{\Sigma}^{-1} \bar{\boldsymbol{\mu}} \right)$ takes $\mathcal{O}(NM^2)$ flops.

For the forth term we use the generalised determinant lemma so that,
$$
|\bar{\Sigma}|  = |\hat{\Sigma}| |K_{SS}^{(A)}|^{-1} |V|.
$$
These determinant computations take $\mathcal{O}(\sum_{p=1}^{P}n^3_p)$, $\mathcal{O}(M^3)$, and $\mathcal{O}(M^3)$, respectively.

The fifth term is trivially $\mathcal{O}(N)$. We can again use the cyclic rotation property of the matrix trace to write the last term as:
\begin{align*}
%\begin{gathered}
\text{tr} \left( \hat{\Sigma}_p^{-1}\tilde K_{X_pX_p}^{(A)} \right) &= \text{tr} \left( \hat{\Sigma}_p^{-1}K_{X_pX_p}^{(A)} \right) - \text{tr} \left( \hat{\Sigma}_p^{-1}K_{X_pS}^{(A)}{K_{SS}^{(A)}}^{-1}K_{SX_p}^{(A)} \right) \\
&= \text{tr} \left( \hat{\Sigma}_p^{-1}K_{X_pX_p}^{(A)} \right) - \text{tr} \left( \left( K_{SX_p}^{(A)}\hat{\Sigma}_p^{-1}K_{X_pS}^{(A)} \right) {K_{SS}^{(A)}}^{-1} \right).
%\end{gathered}
\end{align*}
The first trace takes $\mathcal{O}(n^2_p)$ to compute once the $\hat{\Sigma}^{-1}$ is available. It takes $\mathcal{O}(n_pM^2)$ to compute the product in parenthesis in the second trace, and $\mathcal{O}(M^2)$ for the trace itself once ${K_{SS}^{(A)}}^{-1}$ is available. Since we need to compute these traces for $p=1, \ldots, P$, the overall complexity of the last term in eq.~\eqref{eq:kld_bound_novel} is $\mathcal{O}(\sum_{p=1}^{P}n^2_p + NM^2)$.

Combining the terms we get the final time complexity: 
\begin{eqnarray}
\text{Complexity} \ &=& \mathcal{O}(\sum_{p=1}^{P}n^3_p) +  \mathcal{O}(NM^2) + \mathcal{O}(M^3) + \mathcal{O}(\sum_{p=1}^{P}n^2_p) + \mathcal{O}(NM^2)  \nonumber\\ 
&& + \mathcal{O}(\sum_{p=1}^{P}n^3_p) + \mathcal{O}(M^3) + \mathcal{O}(M^3) + \mathcal{O}(N) + \mathcal{O}(\sum_{p=1}^{P}n^2_p + NM^2) \nonumber\\ 
&=& \mathcal{O}(\sum_{p=1}^{P}n^3_p + NM^2).
\end{eqnarray}

\section{Stochastic Variational Inference for longitudinal Gaussian process}

A fundamental drawback of the methods described in the previous section is that they require using the full training dataset to compute the loss and perform a gradient step. This can be an issue for many problems with large data, such as sequences of images, electronic health records, etc. A common machine learning technique to tackle such problems is based on training the model using mini-batches. The mini-batch approach makes use of unbiased stochastic estimates of the loss and its gradients, which are computed based on a subset of the data. The subsets are chosen such that all training data points are used within an epoch. In this section, we first recall earlier work on how to adjust the bound of eq.~\eqref{eq:elbo_titsias} to become compatible with stochastic variational inference and mini-batching. Then, we modify this bound to account for the specifics of the GP covariance structure in L-VAE.

In contrast to \citet{titsias2009variational}, \citet{hensman2013gaussian} proposed to avoid analytical marginalisation of inducing values $\u$ in eq.~\eqref{eq:elbo_titsias_unmarginalized_u}. Instead, \citet{hensman2013gaussian} proposed to explicitly keep track of its distribution, which is assumed to be Gaussian $\u \sim \mathcal{N}(\m,H)$. Then, the authors derived an alternative evidence lower bound:
\begin{align}
\label{eq:elbo_hensman}
%\begin{gathered}
    \log p(\z) &= \int_{\u}\int_{\f} p(\z |  \f)  p(\f |  \u) p(\u) d\f d\u = \log \int_{\u} p(\z |  \u) \frac{p(\u)}{q(\u)} q(\u) d\u \ge \int_{\u} \log p(\z | \u) q(\u) d\u - D_{KL}(q(\u)||p(\u)) \nonumber
    \\
    &\geq \int_{\u} \log \mathcal{N}(\z|K_{XS}K_{SS}^{-1}\u,\sigma_z^2 I_N) \mathcal{N}(\u|\m,H) d\u - \frac{1}{2\sigma_z^2}\tr(\tilde{K}) - D_{KL}(\mathcal{N}(\m,H) || \mathcal{N}(\boldsymbol{0},K_{SS})) \nonumber
    \\
    &= \log \mathcal{N}(\z | K_{XS}K_{SS}^{-1}\m,\sigma_z^2 I_N) - \frac{1}{2\sigma_z^2}\tr(HK_{SS}^{-1}K_{SX}K_{XS}K_{SS}^{-1}) - \frac{1}{2\sigma_z^2}\tr(\tilde{K}) -  D_{KL}(\mathcal{N}(\m,H) || \mathcal{N}(\boldsymbol{0},K_{SS})) \nonumber\\ &\triangleq \mathcal{L}_3.
%\end{gathered}
\end{align}
Substituting this bound in eq.~\eqref{eq:kld_bound_elbo} yields:
\begin{align}
%\begin{gathered}
D_{\mathrm{KL}}(\mathcal{N}(\bar{\boldsymbol{\mu}},W)||\mathcal{N}(\boldsymbol{0},\Sigma)) &\leq -\int_{\z} \mathcal{L}_3 \mathcal{N}(\z|\bar{\boldsymbol{\mu}},W) d\z - \frac{1}{2}\log|(2\pi e)W| \nonumber \\
&= -\int_{\z}\log \mathcal{N}(\z | K_{XS}K_{SS}^{-1}\m,\sigma_z^2 I_N) \mathcal{N}(\z|\bar{\boldsymbol{\mu}},W) d\z + \frac{1}{2\sigma_z^2}\tr(\tilde{K})\nonumber\\ & \quad + \frac{1}{2\sigma_z^2}\tr(HK_{SS}^{-1}K_{SX}K_{XS}K_{SS}^{-1})+ D_{KL}(\mathcal{N}(\m,H) || \mathcal{N}(\boldsymbol{0},K_{SS}))  - \frac{1}{2}\log|(2\pi e)W| \nonumber\\
&= \frac{1}{2}(K_{XS}K_{SS}^{-1}\m - \bar{\boldsymbol{\mu}})^T (\sigma_z^2 I_N)^{-1} (K_{XS}K_{SS}^{-1}\m - \bar{\boldsymbol{\mu}}) + \frac{1}{2}\tr((\sigma_z^2 I_N)^{-1}W) + \frac{N}{2} \log \sigma_z^2 \nonumber \\ &\quad +\frac{N}{2} \log 2\pi 
+ \frac{1}{2\sigma_z^2}\tr(\tilde{K}) + \frac{1}{2\sigma_z^2}\tr(HK_{SS}^{-1}K_{SX}K_{XS}K_{SS}^{-1}) \nonumber \\ &\quad + D_{KL}(\mathcal{N}(\m,H) || \mathcal{N}(\boldsymbol{0},K_{SS}))  - \frac{1}{2}\log|(2\pi e)W| \nonumber \\
&= \frac{1}{2}\biggl( \sigma_z^{-2} \sum_{i=1}^N{(K_{\x_iS}K_{SS}^{-1}\m - \bar{\mu}_i)^2} + \sigma_z^{-2} \sum_{i=1}^N{\sigma_\phi^2(\y_i)} + N \log \sigma_z^2 + \sigma_z^{-2} \sum_{i=1}^N{\tilde{K}_{ii}}\nonumber\\ &\quad + \sigma_z^{-2} \sum_{i=1}^N{\tr\left( \left(K_{SS}^{-1}HK_{SS}^{-1} \right) \left(K_{S\x_i}K_{\x_iS} \right)\right)}
- \sum_{i=1}^N{\log \sigma_\phi^2(\y_i)} - N \biggl)\nonumber\\  &\quad + D_{KL}(\mathcal{N}(\m,H) || \mathcal{N}(\boldsymbol{0},K_{SS}))\nonumber\\  &\triangleq \mathcal{D}_3 
%\end{gathered}
\end{align}
Each term, except the last one, is additive over $i=1,\ldots, N$. Therefore, replacing the sum over all $i=1,\ldots, N$ with a batch-normalised partial sum over a subset of indices, $\mathcal{I} \subset \{1, \ldots, N\}$ of size $|\mathcal{I}| = \hat{N}$:
\begin{align}
%\begin{gathered}
    \hat{\mathcal{D}}_3 &= \frac{1}{2}\frac{N}{\hat{N}} \sum_{i \in \mathcal{I}}{\left( \sigma_z^{-2} (K_{\x_iS}K_{SS}^{-1}\m - \bar{\boldsymbol{\mu}}_i)^2 + \sigma_z^{-2} \sigma_\phi^2(\y_i) + \sigma_z^{-2} \tilde{K}_{ii} + \sigma_z^{-2} \tr\left( \left(K_{SS}^{-1}HK_{SS}^{-1} \right) \left(K_{S\x_i}K_{\x_iS} \right)\right) - \log \sigma_\phi^2(\y_i) \right)} \nonumber \\ 
    &\quad + \frac{N}{2} \log \sigma_z^2  - \frac{N}{2} + D_{KL}(\mathcal{N}(\m,H) || \mathcal{N}(\boldsymbol{0},K_{SS})), 
%\end{gathered}
\end{align}
is an unbiased estimate of the KL divergence upper bound $E_{\mathcal{I} \sim \mathfrak{S}\{1, \ldots, N\}} (\hat{\mathcal{D}}_3) = \mathcal{D}_3 \geq D_{\mathrm{KL}}(\mathcal{N}(\bar{\boldsymbol{\mu}},W)||\mathcal{N}(\boldsymbol{0},\Sigma))$. Here, $\mathfrak{S}\{1, \ldots, N\}$ is a uniform distribution over elements of arbitrary fixed partitions of set $\{1, \ldots, N\}$. This property enables us to use the mini-batching technique for the approximate computation of the KL divergence term of L-VAE and its gradients.

Similar to our criticism of the ELBO $\mathcal{L}_1$, the $\mathcal{L}_3$ is not well suited to the typical properties of GP covariance structures used for longitudinal modelling. Following the same notation as in the previous section, we introduce a modification to eq.~\eqref{eq:elbo_hensman}:
\begin{align}
\label{eq:elbo_novel_hensman}
%\begin{gathered}
    \log p(\z) &= \int_{\u}\int_{\f} p(\z |  \f)  p(\f |  \u) p(\u) d\f d\u = \log \int_{\u} p(\z |  \u) \frac{p(\u)}{q(\u)} q(\u) d\u \geq \int_{\u} \log p(\z | \u) q(\u) d\u - D_{KL}(q(\u)||p(\u)) \nonumber
    \\
    &\geq \int_{\u} \log \mathcal{N}(\z|K_{XS}^{(A)}{K_{SS}^{(A)}}^{-1}\u,\hat{\Sigma}) \mathcal{N}(\u|\m,H) d\u - \frac{1}{2}\sum_{p=1}^P \text{tr} \left( \hat{\Sigma}_p^{-1} \tilde{K}^{(A)}_{X_pX_p} \right) - D_{KL}(\mathcal{N}(\m,H) || \mathcal{N}(\boldsymbol{0},K_{SS}^{(A)})) \nonumber
    \\ 
    &= \log \mathcal{N}(\z | K_{XS}^{(A)}{K_{SS}^{(A)}}^{-1}\m, \hat{\Sigma}) - \frac{1}{2}\tr(H{K_{SS}^{(A)}}^{-1}K_{SX}^{(A)}\hat{\Sigma}^{-1}K_{XS}^{(A)}{K_{SS}^{(A)}}^{-1}) - \frac{1}{2}\sum_{p=1}^P \text{tr} \left( \hat{\Sigma}_p^{-1} \tilde{K}^{(A)}_{X_pX_p} \right)\nonumber\\ &\quad-  D_{KL}(\mathcal{N}(\m,H) || \mathcal{N}(\boldsymbol{0},K_{SS}^{(A)})) \nonumber \\ &\triangleq \mathcal{L}_4
%\end{gathered}
\end{align}
Substituting this novel ELBO in eq.~\eqref{eq:kld_bound_elbo} yields:
\begin{align}
%\begin{gathered}
D_{\mathrm{KL}}(\mathcal{N}(\bar{\boldsymbol{\mu}},W)||\mathcal{N}(\boldsymbol{0},\Sigma)) &\leq -\int_{\z} \mathcal{L}_4 \mathcal{N}(\z|\bar{\boldsymbol{\mu}},W) d\z - \frac{1}{2}\log|(2\pi e)W| \nonumber \\ &= -\int_{\z}\log \mathcal{N}(\z | K_{XS}^{(A)}{K_{SS}^{(A)}}^{-1}\m, \hat{\Sigma})\mathcal{N}(\z|\bar{\boldsymbol{\mu}},W) d\z + \frac{1}{2}\sum_{p=1}^P \text{tr} \left( \hat{\Sigma}_p^{-1} \tilde{K}^{(A)}_{X_pX_p} \right) \nonumber \\&\quad + \frac{1}{2}\tr(H{K_{SS}^{(A)}}^{-1}K_{SX}^{(A)}\hat{\Sigma}^{-1}K_{XS}^{(A)}{K_{SS}^{(A)}}^{-1}) + D_{KL}(\mathcal{N}(\m,H) || \mathcal{N}(\boldsymbol{0},K_{SS}^{(A)}))\nonumber \\&\quad  - \frac{1}{2}\log|(2\pi e)W| \nonumber
\\
&= \frac{1}{2}(K_{XS}^{(A)}{K_{SS}^{(A)}}^{-1}\m - \bar{\boldsymbol{\mu}})^T \hat{\Sigma}^{-1} (K_{XS}^{(A)}{K_{SS}^{(A)}}^{-1}\m - \bar{\boldsymbol{\mu}}) + \frac{1}{2}\tr(\hat{\Sigma}^{-1}W) + \frac{1}{2} \log |\hat{\Sigma}| \nonumber\\ &\quad + \frac{N}{2} \log 2\pi
+ \frac{1}{2}\sum_{p=1}^P \text{tr} \left( \hat{\Sigma}_p^{-1} \tilde{K}^{(A)}_{X_pX_p} \right) + \frac{1}{2}\tr(H{K_{SS}^{(A)}}^{-1}K_{SX}^{(A)}\hat{\Sigma}^{-1}K_{XS}^{(A)}{K_{SS}^{(A)}}^{-1}) \nonumber \\&\quad + D_{KL}(\mathcal{N}(\m,H) || \mathcal{N}(\boldsymbol{0},K_{SS}^{(A)}))  - \frac{1}{2}\log|(2\pi e)W| \nonumber
\\
&= \frac{1}{2}\biggl( \sum_{p=1}^P {(K_{X_pS}^{(A)}{K_{SS}^{(A)}}^{-1}\m - \hat{\boldsymbol{\mu}}_p)^T \hat{\Sigma}_p^{-1} (K_{X_pS}^{(A)}{K_{SS}^{(A)}}^{-1}\m - \hat{\boldsymbol{\mu}}_p)}\nonumber \\&\quad + \sum_{p=1}^P \sum_{i=1}^{n_p}{(\hat{\Sigma}_p^{-1})}_{ii} \sigma_\phi^2(\y_{\mathcal{I}_{pi}}) + \sum_{p=1}^P \log |\hat{\Sigma}_p| + \frac{1}{2}\sum_{p=1}^P \text{tr} \left( \hat{\Sigma}_p^{-1} \tilde{K}^{(A)}_{X_pX_p} \right)\nonumber \\ & \quad + \sum_{p=1}^P{\tr\left( \left({K_{SS}^{(A)}}^{-1}H{K_{SS}^{(A)}}^{-1} \right) \left(K_{SX_p}^{(A)}\hat{\Sigma}_p^{-1}K_{X_pS}^{(A)} \right)\right)} - \sum_{i=1}^N{\log \sigma_\phi^2(\y_i)} - N \biggl) \nonumber\\ &\quad + D_{KL}(\mathcal{N}(\m,H) || \mathcal{N}(\boldsymbol{0},K_{SS}^{(A)})) \nonumber \\ &\triangleq \mathcal{D}_4,
%\end{gathered}
\end{align}
where $\mathcal{I}_{pi}$ is the index of the $i$\textsuperscript{th} sample for the $p$\textsuperscript{th} patient and $\hat{\boldsymbol{\mu}}_p = [\bar{\mu}_{\mathcal{I}_{p1}},\ldots,\bar{\mu}_{\mathcal{I}_{pn_p}}]^T$ is a sub-vector of $\bar{\boldsymbol{\mu}}$ that corresponds to the $p$\textsuperscript{th} patient. Each term, except for the last one, is additive over $p=1,\ldots, P$. Therefore, replacing the sum over all $p=1,\ldots, P$ with a batch-normalised partial sum over a subset of indices $\mathcal{P} \subset \{1, \ldots, P\}$ of size $|\mathcal{P}| = \hat{P}$:
\begin{align}
%\begin{gathered}
    \hat{\mathcal{D}}_4 &= \frac{1}{2}\frac{P}{\hat{P}} \sum_{p \in \mathcal{P}} \Biggl( (K_{X_pS}^{(A)}{K_{SS}^{(A)}}^{-1}\m - \hat{\boldsymbol{\mu}}_p)^T \hat{\Sigma}_p^{-1} (K_{X_pS}^{(A)}{K_{SS}^{(A)}}^{-1}\m - \hat{\boldsymbol{\mu}}_p) + \sum_{i=1}^{n_p}{(\hat{\Sigma}_p^{-1})}_{ii} \sigma_\phi^2(\y_{\mathcal{I}_{pi}}) + \log |\hat{\Sigma}_p| \nonumber\\ &\quad + \text{tr} \left( \hat{\Sigma}_p^{-1} \tilde{K}^{(A)}_{X_pX_p} \right) + 
    \tr\left( \left({K_{SS}^{(A)}}^{-1}H{K_{SS}^{(A)}}^{-1} \right) \left(K_{SX_p}^{(A)}\hat{\Sigma}_p^{-1}K_{X_pS}^{(A)} \right)\right) - \sum_{i=1}^{n_p}\log \sigma_\phi^2(\y_{\mathcal{I}_{pi}}) \Biggl)\nonumber\\ &\quad - \frac{N}{2} + D_{KL}(\mathcal{N}(\m,H) || \mathcal{N}(\boldsymbol{0},K_{SS}^{(A)})), 
%\end{gathered}
\end{align}
is an unbiased estimate of the KL divergence upper bound $E_{\mathcal{P} \sim \mathfrak{S}\{1, \ldots, P\}} (\hat{\mathcal{D}}_4) = \mathcal{D}_4 \geq D_{\mathrm{KL}}(\mathcal{N}(\bar{\boldsymbol{\mu}},W)||\mathcal{N}(\boldsymbol{0},\Sigma))$. This property enables us to use the mini-batching technique for a more precise approximate computation of the KL divergence term of L-VAE and its gradients, by approximately splitting equal number of patients to each batch.

The learning of variational parameters $\m$ and $H$ can be done either by explicitly parameterising and updating them within the overall optimisation scheme, or by using the natural gradients approach similar to \citet{hensman2013gaussian}. Similar to the claims of the original paper, using the natural gradients can mitigate the challenge of finding a robust unconstrained parameterisation for the variational parameters. The gradients of KL divergence unbiased estimate $\hat{\mathcal{D}}_4$ w.r.t.\ the variational parameters have the following form:
\begin{align}
%\begin{gathered}
    \frac{\partial \hat{\mathcal{D}}_4}{\partial \m} &= -\sum_{p \in \mathcal{P}}{K_{SS}^{(A)}}^{-1} K_{SX_p}^{(A)}\hat{\Sigma}_p^{-1}\hat{\boldsymbol{\mu}}_p + \left( \sum_{p \in \mathcal{P}}{K_{SS}^{(A)}}^{-1} K_{SX_p}^{(A)}\hat{\Sigma}_p^{-1}K_{X_pS}^{(A)} {K_{SS}^{(A)}}^{-1} + {K_{SS}^{(A)}}^{-1} \right) \m \\
    \frac{\partial \hat{\mathcal{D}}_4}{\partial H} &= -\frac{1}{2}H^{-1} + \frac{1}{2}\sum_{p \in \mathcal{P}}{K_{SS}^{(A)}}^{-1} K_{SX_p}^{(A)}\hat{\Sigma}_p^{-1}K_{X_pS}^{(A)} {K_{SS}^{(A)}}^{-1} + \frac{1}{2} {K_{SS}^{(A)}}^{-1}
%\end{gathered}
\end{align}

Making use of the chain rule, we can write the following:
$$
\frac{\partial \mathcal{D}_4}{\partial \boldsymbol{\eta}} = \frac{\partial \mathcal{D}_4}{\partial [m,H]} \frac{\partial [m,H]}{\partial \boldsymbol{\eta}} = \begin{bmatrix}\frac{\partial \mathcal{D}_4}{\partial m} & \frac{\partial \mathcal{D}_4}{\partial H}\end{bmatrix} \begin{bmatrix}\frac{\partial m}{\partial \eta_1} & \frac{\partial m}{\partial \eta_2} \\ \frac{\partial H}{\partial \eta_1} & \frac{\partial H}{\partial \eta_2} \end{bmatrix} = \begin{bmatrix}\frac{\partial \mathcal{D}_4}{\partial m} & \frac{\partial \mathcal{D}_4}{\partial H}\end{bmatrix} \begin{bmatrix} I & 0 \\ -2m & I \end{bmatrix}
$$

Then, using the update rule and following notation similar to  \citet{hensman2013gaussian}, we get:
\begin{equation}
\begin{split}
    \boldsymbol{\theta}_{2(t+1)} &= \boldsymbol{\theta}_{2(t)} - l\frac{\partial \hat{\mathcal{D}}_4}{\partial \eta_2} \\
    \boldsymbol{\theta}_{2(t+1)} &= \boldsymbol{\theta}_{2(t)} - l\left(\frac{\partial \hat{\mathcal{D}}_4}{\partial \m} \frac{\partial \m}{\partial \eta_2} + \frac{\partial \hat{\mathcal{D}}_4}{\partial H} \frac{\partial H}{\partial \eta_2}\right) \\
    \boldsymbol{\theta}_{2(t+1)} &= \boldsymbol{\theta}_{2(t)} - l\frac{\partial \hat{\mathcal{D}}_4}{\partial H} \\
    -\frac{1}{2} H_{(t+1)}^{-1} &= -\frac{1}{2} H_{(t)}^{-1} - l\frac{\partial \hat{\mathcal{D}}_4}{\partial H} \\
    H_{(t+1)} &= \left(H_{(t)}^{-1} + 2l\frac{\partial \hat{\mathcal{D}}_4}{\partial H}\right)^{-1}
\end{split}
\end{equation}
and
\begin{equation}
\begin{split}
    \boldsymbol{\theta}_{1(t+1)} &= \boldsymbol{\theta}_{1(t)} - l\frac{\partial \hat{\mathcal{D}}_4}{\partial \eta_1} \\
    \boldsymbol{\theta}_{1(t+1)} &= \boldsymbol{\theta}_{1(t)} - l\left(\frac{\partial \hat{\mathcal{D}}_4}{\partial \m} \frac{\partial \m}{\partial \eta_1} + \frac{\partial \hat{\mathcal{D}}_4}{\partial H} \frac{\partial H}{\partial \eta_1}\right) \\
    \boldsymbol{\theta}_{1(t+1)} &= \boldsymbol{\theta}_{1(t)} - l\left(\frac{\partial \hat{\mathcal{D}}_4}{\partial \m} - 2\frac{\partial \hat{\mathcal{D}}_4}{\partial H}\m\right) \\
    H_{(t+1)}^{-1}\m_{(t+1)} &= H_{(t)}^{-1}\m_{(t)} - l\left(\frac{\partial \hat{\mathcal{D}}_4}{\partial \m} - 2\frac{\partial \hat{\mathcal{D}}_4}{\partial H}\m_{(t)}\right) \\
    \m_{(t+1)} &= H_{(t+1)}\left(H_{(t)}^{-1}\m_{(t)} - l\left(\frac{\partial \hat{\mathcal{D}}_4}{\partial \m} - 2\frac{\partial \hat{\mathcal{D}}_4}{\partial H}\m_{(t)}\right)\right)
\end{split}
\end{equation}

\section{Predictive distribution}
The problem of obtaining the predictive distribution for the high-dimensional, out-of-sample data $\y_*$ given covariates $\x_*$ with L-VAE can be split into two parts: obtaining the predictive distribution of the latent representation $\z_*$ and propagating the obtained distribution through the probabilistic decoder $p(y_*|z_*)$. Given the training samples $Y$, covariate information $X$, the learnt parameters of the generative model $\omega = \{\psi, \theta\}$, and the inference model $\phi$, the predictive distribution follows:
\begin{align}
p_\omega(\y_*|\x_*,Y,X) &= \int_{\z_*} p_\omega(\y_*|\z_*,\x_*,Y,X) p_\omega(\z_*|\x_*,Y,X) d\z_* \nonumber \\
&= \int_{\z_*,Z} \underbrace{p_\psi(\y_*|\z_*)}_{\text{decode GP prediction}} \underbrace{p_\theta(\z_*|\x_*,Z,X)}_{\text{GP predictive posterior}} \underbrace{ p_\omega(Z|Y,X)}_{\text{posterior of $Z$}} d\z_* dZ  \nonumber\\
\label{eq:pred_expanded}
&\approx \int_{\z_*,Z} \underbrace{p_\psi(\y_*|\z_*)}_{\text{decode GP prediction}} \underbrace{p_\theta(\z_*|\x_*,Z,X)}_{\text{GP predictive posterior}} \underbrace{q_\phi(Z|Y,X)}_{\text{encode training samples}} d\z_* dZ. 
\end{align}
The true posterior $p_\omega(Z|Y,X)$ is intractable and, similar to the model inference or learning problem, it is replaced with the variational approximation defined by the inference model $q_\phi(Z|Y,X)$. Given such a substitution, the (approximate) predictive GP distribution for the latent representation is available in closed form,
\begin{equation*}
%\label{eq:pred_latent}
    \hat{p}_\omega(\z_*|\x_*,Y,X) = \int_{Z} p_\theta(\z_*|\x_*,Z,X) q_\phi(Z|Y,X) dZ = N(\z_*|\boldsymbol{\mu}_*, \Sigma_*).
\end{equation*}
Since, in the current work, we only consider multi-output GPs with diagonal cross-covariance functions and the output of probabilistic encoder $p_\psi(\y_n|\z_n)$ is restricted to be a multivariate normal distribution with diagonal covariance matrix, the predictive distribution in the latent space also factorises across the latent dimensions $N(\z_*|\boldsymbol{\mu}_*, \Sigma_*) = \prod_{l=1}^{L} N(z_{*l}|\mu_{*l}, \sigma^2_{*l})$ with:
\begin{align*}
\mu_{*l} &= K_{\x_*X}^{(l)}\Sigma_{l}^{-1} \bar{\boldsymbol{\mu}}_{l} \\
\sigma^2_{*l} &= k_{\x_*\x_*}^{(l)} - K_{\x_*X}^{(l)}\Sigma_l^{-1}K_{X\x_*}^{(l)} + K_{\x_*X}^{(l)}\Sigma_l^{-1}W_l\Sigma_l^{-1}K_{X\x_*}^{(l)} + \sigma_{zl}^2,
\end{align*}
where $k_{\x_*\x_*}^{(l)} = K_{\x_*\x_*}^{(l)} = \sum_{r=1}^R K_{\x_*\x_*}^{(l,r)}(\theta_l^{(r)})$ and $K_{\x_*X}^{(l)} = K_{X\x_*}^{(l)T} = \sum_{r=1}^R K_{\x_*X}^{(l,r)}(\theta_l^{(r)})$ (see eq.~(4) in the main text), and $\Sigma_l = \sum_{r=1}^R K_{XX}^{(l,r)}(\theta_l^{(r)}) + \sigma_{zl}^2 I$,  $W_l = \text{diag}({\sigma}^2_{\phi,l}(\y_1), \ldots, {\sigma}^2_{\phi,l}(\y_N))$ and $\bar{\boldsymbol{\mu}}_{l} = [{\mu}_{\phi,l}(\y_1), \ldots, {\mu}_{\phi,l}(\y_N)]^T$ (as in eq.~(8) in the main text). 
Incorporating this property with eq.~\eqref{eq:pred_expanded} leads to, %the result expression
\begin{align*}
p_\omega(\y_*|\x_*,Y,X) &\approx \int_{\z_*} \prod_{d=1}^D \mathcal{N} \left(y_{*d} | g_{\psi,d}(\z_*), \sigma^2_{y d} \right)   \prod_{l=1}^{L} \mathcal{N}(z_{*l}|\mu_{*l}, \sigma^2_{*l}) d\z_*.
\end{align*}

\section{Scalable predictive distribution}
Computing the predictive distribution, as described above, requires performing cubic operations over the $N \times N$ matrices, which makes it practically infeasible for problems with large training data. In this section, we exploit the same inducing points paradigm that was used for $\mathcal{D}_2$  in eq.~\eqref{eq:kld_bound_novel} to alleviate the cubic complexity. For simplicity of notation, we omit the latent dimension index $l$ in the rest of the section. We use the same notation as in eq.~\eqref{eq:elbo_novel} and set $U=K_{SS}^{(A)} +  K_{SX}^{(A)}\hat{\Sigma}^{-1}K_{XS}^{(A)}$. Let $X_* = [\x_{1*}, \ldots, \x_{N'*}]$ denote a collection of $N'$ test data points from $P'$ many subjects. Then,
\begin{align}
\bar{\boldsymbol{\mu}}_{*} &= K_{X_*X}\Sigma^{-1} \bar{\boldsymbol{\mu}} = \left( K_{X_*S}^{(A)}K_{SS}^{(A)^{-1}}K_{SX}^{(A)} + K_{X_*X}^{(R)} \right) \left( K_{XS}^{(A)}K_{SS}^{(A)^{-1}}K_{SX}^{(A)} + \hat{\Sigma} \right)^{-1} \bar{\boldsymbol{\mu}} \nonumber\\ &=\left( K_{X_*S}^{(A)}K_{SS}^{(A)^{-1}}K_{SX}^{(A)} + K_{X_*X}^{(R)} \right) \left( \hat{\Sigma}^{-1}\bar{\boldsymbol{\mu}} - \hat{\Sigma}^{-1} K_{XS}^{(A)}U^{-1}K_{SX}^{(A)} \hat{\Sigma}^{-1}\bar{\boldsymbol{\mu}} \right) 
\end{align}
In practice, the efficient computations are executed in the following order: 
\begin{enumerate}
    \item Compute $\hat{\Sigma}^{-1}\bar{\boldsymbol{\mu}}$; exploiting the block-diagonal property of $\hat{\Sigma}^{-1}$.
    \item Compute $K_{XS}^{(A)}U^{-1}K_{SX}^{(A)} (\hat{\Sigma}^{-1}\bar{\boldsymbol{\mu}})$ using the low-rank properties of all matrices involved.
    \item Compute $\tilde{\boldsymbol{\mu}} = \hat{\Sigma}^{-1}\bar{\boldsymbol{\mu}} - \hat{\Sigma}^{-1} K_{XS}^{(A)}U^{-1}K_{SX}^{(A)} \hat{\Sigma}^{-1}\bar{\boldsymbol{\mu}}$.
    \item Compute $K_{X_*S}^{(A)}K_{SS}^{(A)^{-1}}K_{SX}^{(A)} \tilde{\boldsymbol{\mu}}$ using the low-rank properties of all matrices involved.
    \item Compute $K_{X_*X}^{(R)} \tilde{\boldsymbol{\mu}}$. This relies on the fact that the $K_{X_*X}^{(R)}$ matrix will be very sparse, which can be exploited either via sparse matrix operations or simply by cycling over the sets of rows that correspond to each subject and performing the multiplication only for the non-zero part of $K_{X_*X}^{(R)}$ (if none, this product will be zero for the given subject).
    \item Finally, sum up $\bar{\boldsymbol{\mu}}_{*} = K_{X_*S}^{(A)}K_{SS}^{(A)^{-1}}K_{SX}^{(A)} \tilde{\boldsymbol{\mu}} + K_{X_*X}^{(R)} \tilde{\boldsymbol{\mu}}$
\end{enumerate}

In practice, the predictive distribution will be subject to the inducing point locations $S$ and is guaranteed to be exact if and only if $\u$ is a sufficient statistic of $\f_A$.

The predictive mean can also be expressed in terms of the variational parameters:
%Predictive mean and covariance for explicit tracking of variational parameters as in SVI learning.
$$
K_{X_*S}^{(A)}{K_{SS}^{(A)}}^{-1}\m + K_{X_*X}^{(R)} \hat{\Sigma}^{-1}(\bar{\boldsymbol{\mu}}-K_{XS}^{(A)}{K_{SS}^{(A)}}^{-1}\m)
$$
Also, the predictive covariance can be expressed as follows:
$$
\left(K_{X_*S}^{(A)}-K_{X_*X}^{(R)} \hat{\Sigma}^{-1}K_{XS}^{(A)}\right){K_{SS}^{(A)}}^{-1}H{K_{SS}^{(A)}}^{-1}\left(K_{X_*S}^{(A)}-K_{X_*X}^{(R)} \hat{\Sigma}^{-1}K_{XS}^{(A)}\right)^T + \sigma_z^2 I_{N'} + K_{X_*X_*}^{(R)} - K_{X_*X}^{(R)} \hat{\Sigma}^{-1}K_{XX_*}^{(R)}
$$

\begin{figure}[!b]
  \begin{center}
    \includegraphics[width=0.6\textwidth]{./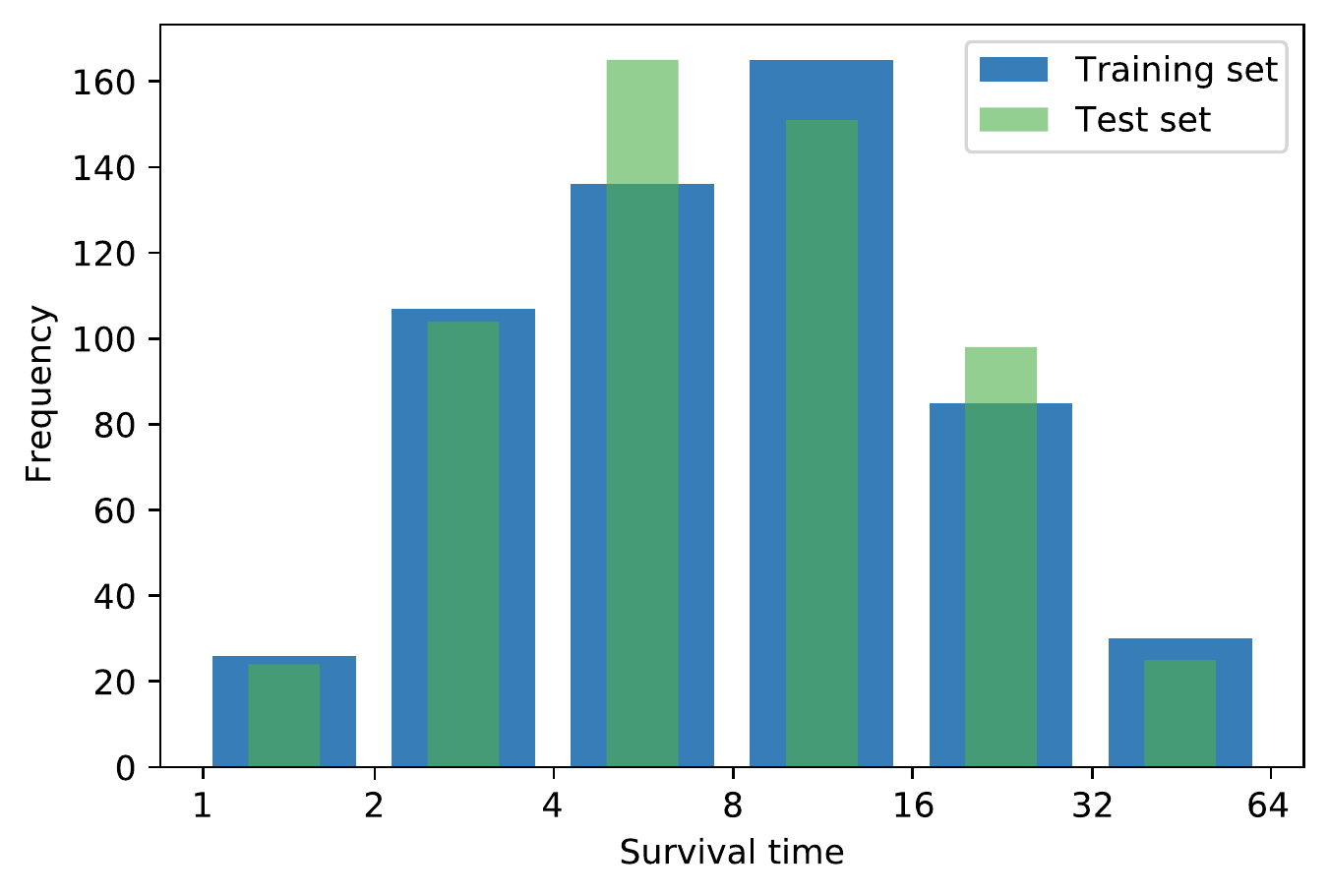}
    \caption{Histogram of survival times in training and test set}
    \label{fig:hist_survival}
  \end{center}
\end{figure}
\section{Downstream classification task in the healthcare data experiment}
Given the test set, our objective is to predict the patient \textit{mortality}. Since our proposed model is generative in nature, to allow such an objective, we shall include the \textit{mortality} covariate in the additive GP prior. The downstream classification task is done by computing the probability that $mortality=0$ and $mortality=1$. %We make use of these computed probabilities to compute the AUROC score.
%\newline
Given the learnt L-VAE model with fixed parameters $\phi,\psi,\theta$, we can obtain the probability for the binary mortality event for each patient $(X_*,Y_*)$ in the test set as,
{\small
\begin{align*}
    P(\text{mortality}=0) &= \frac{\exp(\mathcal{L}(\phi,\psi,\theta;Y_*,X_*, \text{mortality}=0))}{\sum_{i=0}^1\exp(\mathcal{L}(\phi,\psi,\theta;Y_*,X_*, \text{mortality}=i))},\\
    P(\text{mortality}=1) &= 1 - P(\text{mortality}=0).
\end{align*}
}%
Furthermore, we also introduce a time to mortality covariate (or \textit{mortalityTime}) that is based on the survival time. This covariate is relevant for individuals whose \textit{mortality} = 1 and is treated as missing for the individuals that survive. However, in the testing phase, the exact mortality time is not known as our objective is to perform classification based on $\textit{mortality}$. To overcome this problem, we estimate the distribution of the \textit{mortalityTime} covariate based on the values of the covariate in the training set and, in the test, we compute the expectation of the ELBO described in eq.~(5) of the main manuscript w.r.t.\ to that distribution. 

We approximate the expectation with a finite weighted average. Concretely, we first allocate the \textit{mortalityTime} covariate values in the training set into $B$ bins based on a logarithmic scale. Let $\alpha_i$ be the bin count and $t_i$ be the average value of the \textit{mortalityTime} in the $i$\textsuperscript{th} bin. The proportion of \textit{mortalityTime} values in bin $i$ is $w_i = \alpha_i/(\sum_{i=1}^B \alpha_i)$. The weighted ELBO is then computed as:
\begin{align*}
    \mathcal{L}(\phi,\psi,\theta;Y,X, \text{mortality}=1) = \sum_{i=1}^B w_i \cdot \mathcal{L}(\phi,\psi,\theta;Y,X, \text{mortality}=1, t_i),
\end{align*}
where the ELBO terms are now explicitly conditioned by the \textit{mortalityTime}, $t_i$. We use $B=6$ in our analysis. The histograms of the survival times in the training and test data are shown in Fig.~\ref{fig:hist_survival}. 

We follow the data preprocessing steps described in \citet{luo2018multivariate}, and standardise the measurements of the 35 different attributes.

\section{Optimisation and practical considerations}
We make use of a suitable stochastic optimisation technique to minimise the ELBO in eq.~(4) of the main manuscript. The parameters that we need to optimise include the neural network weights ($\phi$ and $\psi$) and kernel parameters ($\theta$) of the multi-output additive GPs. In particular, the optimisation is done using the Adam optimiser \citep{kingma2014adam}, which is an adaptive learning rate method that maintains an exponentially decaying average of past gradients as well as squared gradients. In case of mini-batch training, the Adam steps are conducted interchangeably with natural gradient-based updates of the variational parameters.  For the inference implementation, we make use of PyTorch \citep{paszke2019pytorch} which allows the computation of derivatives using automatic differentiation.

In all the experiments, we first pre-train the neural networks with a standard normal distribution as the prior on the latent space (standard VAE \citep{kingma2013auto}) for $1000$ epochs. This is followed by training the L-VAE model using the pre-trained encoder and decoder networks as initial values for $\phi$ and $\psi$, respectively. While training the L-VAE model, we monitor the loss on the validation (independent) dataset as a performance metric. Similar to the strategy of early stopping, we save the weights of the model that has the best performance on our defined metric. These model weights are chosen to perform predictions and other downstream tasks. However, to handle the possibility of a local minimum, we do not specify a stopping criterion and continue the training procedure till a predefined number of epochs has been performed. In the Rotated MNIST and Health MNIST experiments, L-VAE is trained for a maximum of $2000$ epochs. Moreover, in the Healthcare data experiment, L-VAE is trained for a maximum of $1000$ epochs.

\section{Supplementary tables}
Table~\ref{table:nnet_spec_rotatedMNIST} describes the neural network architecture used for the Rotated MNIST experiment. The hyperparameter choices are similar to \citet{casale2018gaussian}. The architecture used for the Health MNIST experiment is described in table~\ref{table:nnet_spec_healthMNIST}. We have tried to replicate the hyperparameter choices from \citet{fortuin2019multivariate} for this experiment. For the Physionet Challenge 2012 dataset, we did not make use of a convolutional neural network (CNN) as was done for the Rotated MNIST as well as Health MNIST experiments because CNNs are more appropriate for image based (visual) data where the regional correlation (receptive field) of the measured values is important \citep{goodfellow2016deep}. Table~\ref{table:nnet_spec_physionet} describes the architecture for the multi layered perceptron (MLP) that we used for the Physionet Challenge 2012 dataset. It is similar to the architecture used in \citet{fortuin2019multivariate}.
\begin{table}[!h]
\begin{center}
\begin{adjustbox}{width=0.8\textwidth}
\begin{tabular}{ c l r } 
\hline
 & Hyperparameter & Value \\
\hline
\multirow{9}{6em}{Inference network} & Dimensionality of input & $28 \times 28$ \\ 
& Number of convolution layers & 3 \\ 
& Number of filters per convolution layer & 72 \\ 
& Kernel size &  $3 \times 3$\\
& Stride &  2\\
& Number of feedforward layers & 1 \\
& Width of feedforward layers & 128 \\
& Dimensionality of latent space & $L$\\
& Activation function of layers & ELU \\
\hline
\multirow{8}{6em}{Generative network} & Dimensionality of input & $L$ \\
& Number of transposed convolution layers & 3 \\
& Number of filters per transposed convolution layer & 72 \\
& Kernel size &  $3 \times 3$\\
& Stride &  1\\
& Number of feedforward layers & 1 \\
& Width of feedforward layers & 128 \\
& Activation function of layers & ELU \\
\hline
\end{tabular}
\end{adjustbox}
\end{center}
\caption{Neural network architectures used in the Rotated MNIST dataset.}
\label{table:nnet_spec_rotatedMNIST}
\end{table}

\begin{table}[!h]
\begin{center}
\begin{adjustbox}{width=0.8\textwidth}
\begin{tabular}{ c l r } 
\hline
 & Hyperparameter & Value \\
\hline
\multirow{12}{6em}{Inference network} & Dimensionality of input & $36 \times 36$ \\ 
& Number of convolution layers & 2 \\ 
& Number of filters per convolution layer & 144 \\ 
& Kernel size &  $3 \times 3$\\
& Stride &  2\\
& Pooling & Max pooling\\
& Pooling kernel size & $2 \times 2$\\
& Pooling stride & 2\\
& Number of feedforward layers & 2 \\
& Width of feedforward layers & 300, 30 \\
& Dimensionality of latent space & $L$\\
& Activation function of layers & RELU \\
\hline
\multirow{8}{6em}{Generative network} & Dimensionality of input & $L$ \\
& Number of transposed convolution layers & 2 \\
& Number of filters per transposed convolution layer & 256 \\
& Kernel size &  $4 \times 4$\\
& Stride &  2\\
& Number of feedforward layers & 2 \\
& Width of feedforward layers & 30, 300 \\
& Activation function of layers & RELU \\
\hline
\end{tabular}
\end{adjustbox}
\end{center}
\caption{Neural network architectures used in the Health MNIST dataset.}
\label{table:nnet_spec_healthMNIST}
\end{table}

\begin{table}[!h]
\begin{center}
\begin{adjustbox}{width=0.8\textwidth}
\begin{tabular}{ c l r } 
\hline
 & Hyperparameter & Value \\
\hline
\multirow{5}{6em}{Inference network} & Dimensionality of input & 35 \\ 
& Number of feedforward layers & 2 \\
& Number of elements in each feedforward layer & 128, 64 \\
& Dimensionality of latent space & $L$\\
& Activation function of layers & RELU \\
\hline
\multirow{4}{6em}{Generative network} & Dimensionality of input & $L$ \\
& Number of feedforward layers & 2 \\
& Number of elements in each feedforward layer & 64, 128 \\
& Activation function of layers & RELU \\
\hline
\end{tabular}
\end{adjustbox}
\end{center}
\caption{Neural network architectures used in the Physionet Challenge 2012  dataset.}
\label{table:nnet_spec_physionet}
\end{table}

\FloatBarrier

\section{Supplementary images}
\begin{figure}[!ht]
  \begin{center}
    \includegraphics[width=0.8\linewidth]{./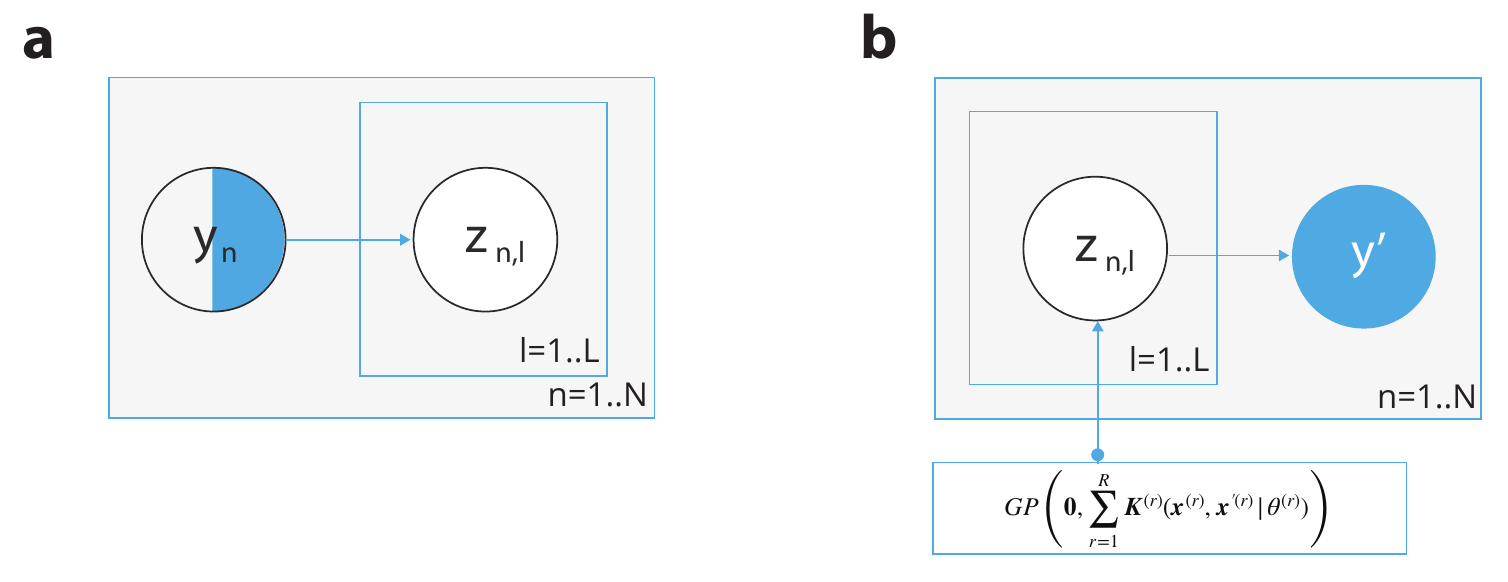}
    \caption{Plate diagram of the model. The shaded circle refers to an observed variable, the partially shaded circle refers to a partially observed variable (due to missing values), and the un-shaded circle refers to an un-observed variable. \textbf{(a)} Represents the inference (or encoder) model and \textbf{(b)} Represents the generative (or decoder) model.}
    \label{fig:plate_diagram}
  \end{center}
\end{figure}

\begin{figure}[!h]
  \begin{center}
    \includegraphics[width=0.8\linewidth]{./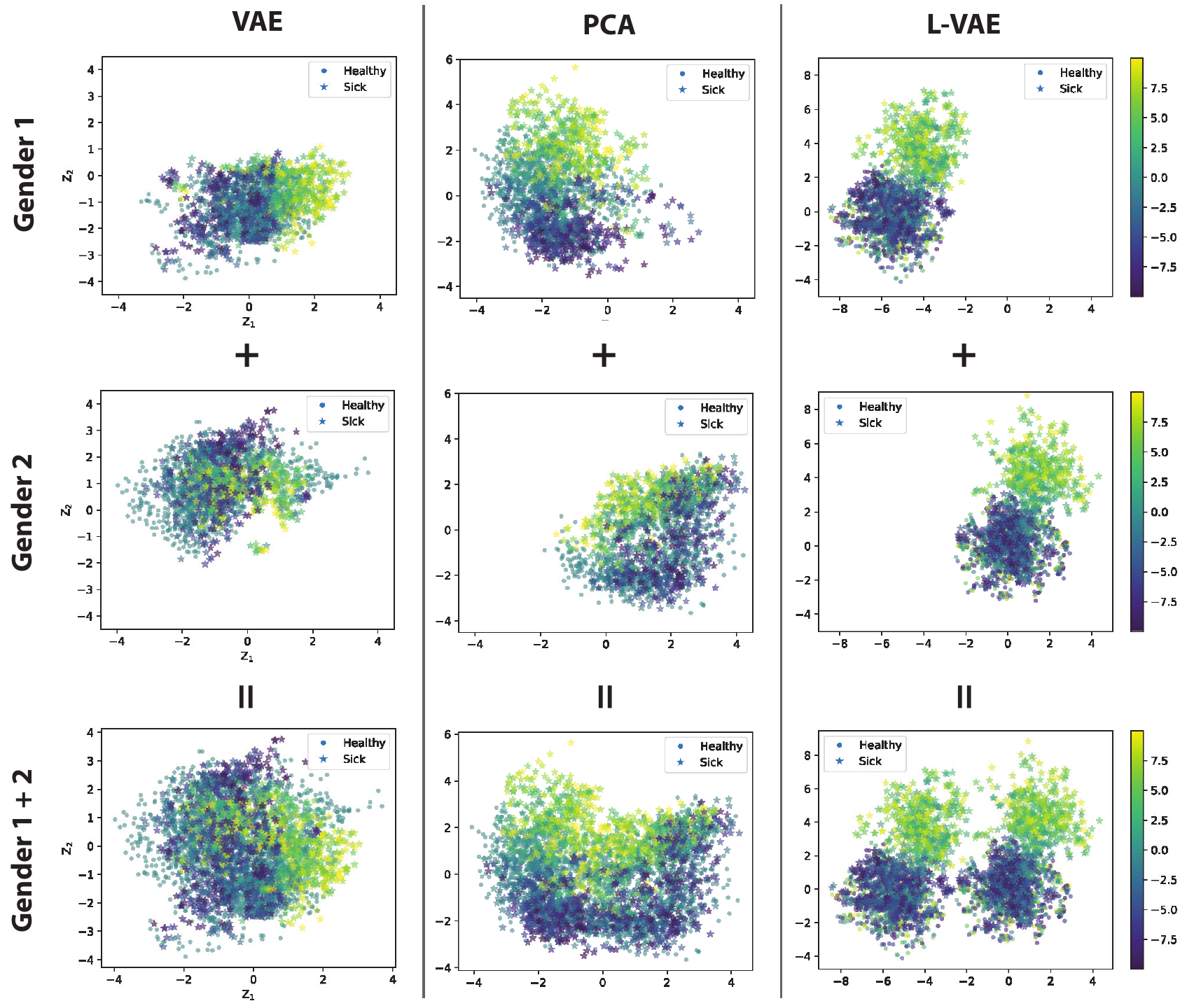}
    \caption{Comparison of the resulting latent space using VAE, PCA, and L-VAE on the Health MNIST dataset. The L-VAE model is fit using $\f_{\mathrm{ca}}(\mathrm{id}) + \f_{\mathrm{se}}(\mathrm{age}) + \f_{\mathrm{ca} \times \mathrm{se}}(\mathrm{id} \times \mathrm{age}) + \f_{\mathrm{ca} \times \mathrm{se}}(\mathrm{sex} \times \mathrm{age})+ \f_{\mathrm{ca} \times \mathrm{se}}(\mathrm{diseasePresence} \times \mathrm{diseaseAge})$ as the multi-output additive GP prior. The number of latent dimensions is set to 2. The points are coloured according to the $\mathrm{diseaseAge}$ as shown in the colour bar.}
    \label{fig:latent_comparison}
  \end{center}
\end{figure}

\begin{figure}[!h]
  \begin{center}
    \includegraphics[width=1\linewidth]{./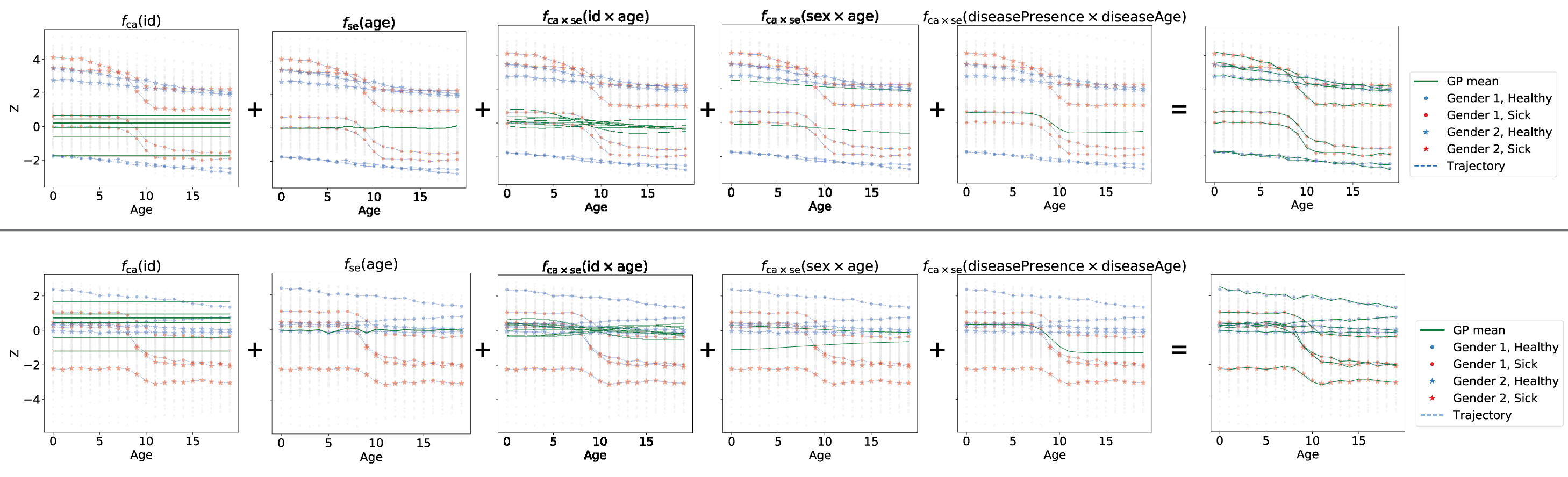}
    \caption{GP model fittings of L-VAE in the latent space with dimension 2 on the Health MNIST dataset.}
    \label{fig:GP_fitting}
  \end{center}
\end{figure}

% \begin{figure}[!h]
%   \begin{center}
%     \includegraphics[width=0.8\linewidth]{AISTATS-2021/images/loss_rotatedMNIST.pdf}
%     \caption{Visualisation of loss and MSE on test set for the Rotated MNIST experiment. The additive GP prior used is $\f_{\mathrm{ca}}(\mathrm{id}) + \f_{\mathrm{se}}(\mathrm{angle}) + \f_{\mathrm{ca} \times \mathrm{se}}(\mathrm{id} \times \mathrm{angle})$ and $16$ latent dimensions.}
%     \label{fig:loss_rotatedMNIST}
%   \end{center}
% \end{figure}

% \begin{figure}[!h]
%   \begin{center}
%     \includegraphics[width=0.8\linewidth]{AISTATS-2021/images/loss_healthMNIST.pdf}
%     \caption{Visualisation of loss and MSE of imputed missing values for the Health MNIST experiment. The additive GP prior used is $\f_{\mathrm{ca}}(\mathrm{id}) + \f_{\mathrm{se}}(\mathrm{age}) + \f_{\mathrm{ca} \times \mathrm{se}}(\mathrm{id} \times \mathrm{age}) + \f_{\mathrm{ca} \times \mathrm{se}}(\mathrm{sex} \times \mathrm{age})+ \f_{\mathrm{ca} \times \mathrm{se}}(\mathrm{diseasePresence} \times \mathrm{diseaseAge})$ and $32$ latent dimensions.}
%     \label{fig:loss_healthMNIST}
%   \end{center}
% \end{figure}

% \begin{figure}[!h]
%   \begin{center}
%     \includegraphics[width=0.8\linewidth]{AISTATS-2021/images/loss_auroc.pdf}
%     \caption{Visualisation of loss and AUROC for the classification task on the test set for the Healthcare data experiment. The additive GP prior used is $\f_{\mathrm{ca}}(\mathrm{id}) + \f_{\mathrm{se}}(\mathrm{time}) + \f_{\mathrm{ca} \times \mathrm{se}}(\mathrm{mortality} \times \mathrm{time}) + \f_{\mathrm{ca} \times \mathrm{se}}(\mathrm{sex} \times \mathrm{time})+ \f_{\mathrm{ca} \times \mathrm{se}}(\mathrm{ICUtype} \times \mathrm{time}) + \f_{\mathrm{ca} \times \mathrm{se}}(\mathrm{id} \times \mathrm{time}) + \f_{\mathrm{ca} \times \mathrm{se}}(\mathrm{mortality} \times \mathrm{mortalityTime})$ and $32$ latent dimensions.}
%     \label{fig:loss_auroc}
%   \end{center}
% \end{figure}

% \begin{figure}[!h]
%   \begin{center}
%     \includegraphics[width=0.65\linewidth]{AISTATS-2021/images/rotatedMNIST_mse.pdf}
%     \caption{Comparison of MSE on the test set for the Rotated MNIST experiment using more configurations of L-VAE. It can be seen that the MSE performance is better when a more comprehensive additive kernel that utilises multiple auxiliary covariates is used.}
%     \label{fig:rotatedMNIST_mse}
%   \end{center}
% \end{figure}

% \begin{figure}[!h]
%   \begin{center}
%     \includegraphics[width=0.65\linewidth]{AISTATS-2021/images/healthMNIST_mse.pdf}
%     \caption{Comparison of MSE of the imputed missing values in the Health MNIST experiment using more configurations of L-VAE. The performance of L-VAE surpasses GP-VAE \citep{fortuin2019multivariate} when a multi-output additive GP prior with multiple auxiliary covariates is used. Moreover, it can be seen that L-VAE performs competitively even when simple multi-output additive GP priors are used with lower number of latent dimensions.}
%     \label{fig:healthMNIST_mse}
%   \end{center}
% \end{figure}

\begin{figure}[!h]
  \begin{center}
    \includegraphics[width=0.65\linewidth]{./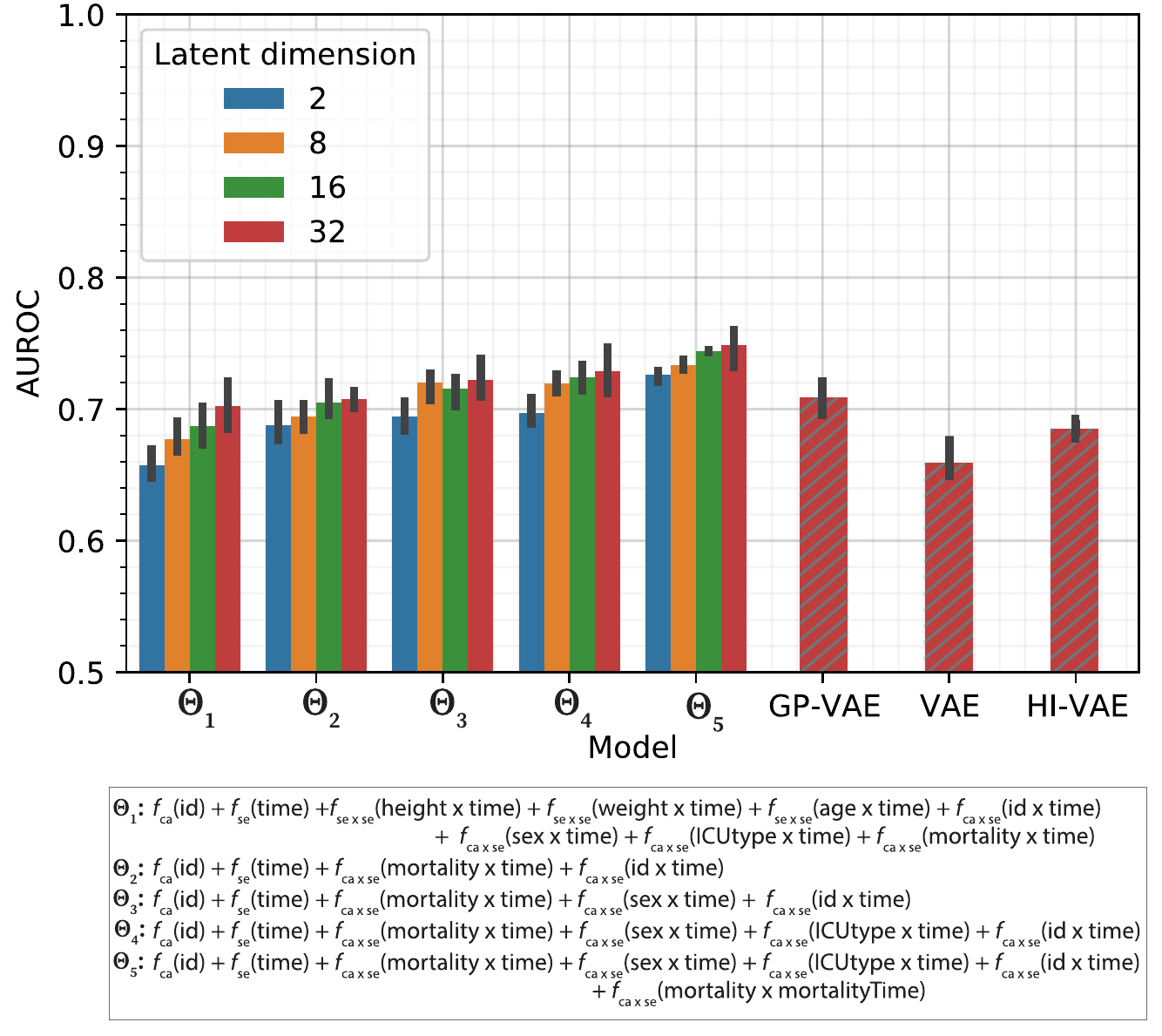}
    \caption{AUROC scores for the patient mortality prediction task on the test set of the Physionet Challenge 2012 dataset. This is an extension to Fig.~4 in the main manuscript. In this figure, we can observe L-VAE's performance with different latent dimensions as well as all the patient-specific general auxiliary covariates. Higher AUROC score is better.}
    \label{fig:physionet_ful}
  \end{center}
\end{figure}

\FloatBarrier
\bibliographystyle{abbrvnat}
\bibliography{references}